\DeclareMathAlphabet{\pazocal}{OMS}{zplm}{m}{n}
\DeclareMathAlphabet\mathbfcal{OMS}{cmsy}{b}{n}
\newtheorem{theorem}{Theorem}
\newtheorem{lemma}{Lemma}
\newtheorem{remark}{Remark}
\newtheorem{corollary}{Corollary}
\providecommand{\nor}[1]{\ensuremath{\left\lVert {#1} \right\rVert}}
\providecommand{\scalT}[2]{\ensuremath{\left\langle{#1},{#2}\right\rangle}}
\newcommand{\indep}{\raisebox{0.05em}{\rotatebox[origin=c]{90}{$\models$}}}
\def\bit{\begin{itemize}}
\def\eit{\end{itemize}}
\def\ben{\begin{enumerate}}
\def\een{\end{enumerate}}
\definecolor{dkgreen}{rgb}{0,0.6,0}
\definecolor{gray}{rgb}{0.5,0.5,0.5}
\definecolor{mauve}{rgb}{0.58,0,0.82}
\definecolor{cb_orange}{rgb}{1.0,0.51,0.0}
\definecolor{cb_blue}{rgb}{0.22,0.49,0.72}
\definecolor{cb_green}{rgb}{0.3,0.67,0.29}
\definecolor{cb_red}{rgb}{0.89,0.1,0.11}
\definecolor{cb_purple}{rgb}{0.6, 0.31, 0.64}
\definecolor{cb_brown}{rgb}{0.6, 0.4, 0.2}
\definecolor{cb_crimson}{rgb}{0.86, 0.08, 0.24}
\tiny\color{gray},
\title{Sobolev Independence Criterion}
\author{%
  Youssef Mroueh, Tom Sercu, Mattia Rigotti, Inkit Padhi, Cicero Dos Santos \thanks{ Tom Sercu is now with Facebook AI Research, and Cicero Dos Santos with Amazon AWS AI. The work was done when they were at IBM Research.} \\
  IBM Research $\&$
  MIT-IBM Watson AI lab \\
 \texttt{mroueh,mrigotti@us.ibm.com,inkit.padhi@ibm.com} \\
}
\begin{document}

\maketitle

\begin{abstract}
We propose the Sobolev Independence Criterion (SIC), an interpretable dependency measure between a high dimensional random variable $X$ and a response variable $Y$.
SIC decomposes to the sum of feature importance scores and hence can be used for nonlinear feature selection.
SIC can be seen as a gradient regularized Integral Probability Metric (IPM) between the joint distribution of the two random variables and the product of their marginals.
We use sparsity inducing gradient penalties to promote input sparsity of the critic of the IPM.
In the kernel version we show that SIC can be cast as a convex optimization problem by introducing auxiliary variables that play an important role in feature selection as they are normalized feature importance scores.
We then present a neural version of SIC where the critic is parameterized as a homogeneous neural network, improving its representation power as well as its interpretability.
We conduct experiments validating SIC for feature selection in synthetic and real-world experiments.
We show that SIC enables reliable and interpretable discoveries, when used in conjunction with the holdout randomization test and knockoffs to control the False Discovery Rate. Code is available at \url{http://github.com/ibm/sic}.
\end{abstract}

\section{Introduction}
Feature Selection is an important problem in statistics and machine learning for interpretable predictive modeling and scientific discoveries.
Our goal in this paper is to design a dependency measure that is interpretable and can be reliably used to control the False Discovery Rate in feature selection.
\noindent
The mutual information between two random variables $X$ and $Y$ is the most commonly used dependency measure.
The mutual information $I(X;Y)$ is defined as the Kullback-Leibler divergence between the joint distribution $p_{xy}$ of $X,Y$ and the product of their marginals $p_xp_y$, $I(X;Y)=\text{KL}(p_{xy},p_{x}p_{y})$.
Mutual information is however challenging to estimate from samples, which motivated the introduction of dependency measures based on other $f$-divergences or Integral Probability Metrics \citep{Sriperumbudur2009OnIP} than the $\text{KL}$ divergence.
For instance, the Hilbert-Schmidt Independence Criterion (HSIC) \citep{HSIC} uses the Maximum Mean Discrepancy (MMD) \citep{MMD} to assess the dependency between two variables, i.e.\ $\text{HSIC}(X,Y)= \text{MMD}(p_{xy},p_{x}p_{y})$,
which can be easily estimated from samples via Kernel mean embeddings in a Reproducing Kernel Hilbert Space (RKHS) \citep{KernelMeanEmbedding}. 
In this paper we introduce the Sobolev Independence Criterion (SIC), a form of gradient regularized Integral Probability Metric (IPM)~\cite{SobolevGAN,SobolevDescent,Arbel:2018} between the joint distribution and the product of  marginals.
SIC relies on the statistics of the gradient of a witness function, or critic, for both
(1) defining the IPM constraint and 
(2) finding the features that discriminate between the joint and the marginals.
Intuitively, the magnitude of the average gradient with respect to a feature gives an importance score for each feature. Hence, promoting its sparsity is a natural constraint for feature selection.

\noindent
The paper is organized as follows: we show in Section \ref{sec:gradPenalty} how sparsity-inducing gradient penalties can be used to define an interpretable dependency measure that we name Sobolev Independence Criterion (SIC).
We devise an equivalent computational-friendly formulation of SIC in Section \ref{sec:etatrick}, that gives rise to additional auxiliary variables $\eta_j$. These naturally define normalized feature importance scores that can be used for feature selection.
In Section \ref{sec:ConvexSIC} we study the case where the SIC witness function $f$ is restricted to an RKHS and show that it leads to an optimization problem that is jointly convex in $f$ and the importance scores $\eta$.
We show that in this case SIC decomposes into the sum of feature scores, which is ideal for feature selection.
In Section \ref{sec:NeuralSIC} we introduce a Neural version of SIC, which we show preserves the advantages in terms of interpretability when the witness function is parameterized as a homogeneous neural network, and which we show can be optimized using stochastic Block Coordinate Descent.
In Section \ref{sec:HRT} we show how SIC and conditional Generative models can be used to control the False Discovery Rate using the recently introduced Holdout Randomization Test \citep{HRT} and Knockoffs \citep{Candes2018PanningFG}.
We validate SIC and its FDR control on synthetic and real datasets in Section \ref{sec:exp}.
    
    
    
    
 
\section{Sobolev Independence Criterion: Interpretable Dependency Measure}\label{sec:gradPenalty}

\noindent \textbf{Motivation: Feature Selection.}
We start by motivating gradient-sparsity regularization in SIC as a mean of selecting the features that maintain maximum dependency between two randoms variable $X$ (the input) and $Y$ (the response) defined on two spaces $\pazocal{X} \subset \mathbb{R}^{d_{x}}$ and $\pazocal{Y}\subset \mathbb{R}^{d_{y}}$ (in the simplest case $d_y=1$).  Let $p_{xy}$ be the joint distribution of $(X,Y)$ and $p_{x}$, $p_{y}$ be  the marginals of $X$ and $Y$ resp. 
Let $D$ be an Integral Probability Metric associated with a function space $\mathcal{F}$, i.e for two distributions $p,q$: 
$$D(p,q)=\sup_{f\in \mathcal{F}} \mathbb{E}_{x\sim p}f(x)-\mathbb{E}_{x\sim q}f(x).$$
With $p=p_{xy}$ and $q=p_x p_y$ this becomes a generalized definition of Mutual Information. Instead of the usual KL divergence, the metric $D$ with its witness function, or critic, $f(x,y)$ measures the distance between the joint $p_{xy}$ and the product of marginals $p_x p_y$.
With this generalized definition of mutual information,
the feature selection problem can be formalized as finding a sparse selector or gate $w \in \mathbb{R}^{d_x}$ such that $ D(p_{w\odot x,y},p_{w\odot x}p_y)$  is maximal \citep{mmdFeatureSelection, Feng2017SparseInputNN,penalizedNNDropOne,gating}
, i.e.\
$\sup_{w, \nor{w}_{\ell_0}\leq s } D(p_{w\odot x,y},p_{w\odot x}p_y),$
where $\odot$ is a pointwise multiplication and $\nor{w}_{\ell_0}=\#\{j|w_j \neq 0 \}$. This problem can be written in the following penalized form:
\vskip -0.2in
$$ \text{(P)}:~~ \sup_{w} \sup_{f \in \mathcal{F}} \mathbb{E}_{p_{xy}} f(w\odot x, y)-  \mathbb{E}_{p_{x}p_{y}} f(w\odot x, y)  - \lambda ||w||_{\ell_0} .$$
\vskip -0.12in

\noindent
We can relabel $\tilde{f}(x,y)= f(w\odot x, y)$ and write (P)  as: 
$  \sup_{\tilde{f} \in\tilde{ \mathcal{F}}} \mathbb{E}_{p_{xy}} \tilde{f} (x, y)-  \mathbb{E}_{p_{x}p_{y}} \tilde{f}( x, y)  $, 
where $\tilde{\mathcal{F}}=\{ \tilde{f}|\tilde{f}(x,y)= f(w\odot x,y) | f\in \mathcal{F}, \nor{w}_{\ell_0} \leq s\}$.
Observe that we  have:
$\frac{\partial \tilde{f} }{\partial x_j}= w_j \frac{\partial f(w\odot x,y)}{\partial x_j}.$
Since $w_j$ is sparse the gradient of $\tilde{f}$ is sparse on the support of $p_{xy}$ and $p_{x}p_{y}$. 
Hence, we can reformulate the problem (P) as follows: 
$$\text{(SIC):~~} \sup_{f \in \mathcal{F}} \mathbb{E}_{p_{xy}} f(x,y)- \mathbb{E}_{p_{x}p_{y}}f(x,y) - \lambda P_{S}(f),$$
where $P_{S}(f)$ is a penalty that controls the sparsity of the gradient of the witness function $f$ on the support of the measures. Controlling the nonlinear sparsity of the witness function in (SIC) via its gradients is more general and powerful than the linear sparsity control suggested in the initial form (P), since it takes into account the nonlinear interactions with other variables.  
In the following Section we formalize this intuition by theoretically examining sparsity-inducing gradient penalties \citep{sparsityK}.

%

\noindent \textbf{Sparsity Inducing Gradient Penalties.}
Gradient penalties have a long history in machine learning and signal processing. In image processing the total variation norm is used for instance as a regularizer to induce smoothness. Splines in Sobolev spaces \citep{wahba1975smoothing}, and manifold learning exploit gradient regularization to promote smoothness and regularity of the estimator.
In the context of neural networks, gradient penalties were made possible through double back-propagation introduced in \citep{drucker1992improving} and were shown to promote robustness and better generalization.  Such smoothness penalties became popular in deep learning partly following the introduction of WGAN-GP \citep{gulrajani2017improved},
and were used as regularizer for distance measures between distributions in connection to optimal transport theory \citep{SobolevGAN,SobolevDescent,Arbel:2018}.
Let $\mu$ be a dominant measure of $p_{xy}$ and $p_{x}p_{y}$  the most  commonly used gradient penalties is 
$$\Omega_{L^2}(f)= \mathbb{E}_{(x,y) \sim \mu} \nor{\nabla_x f(x,y)}^2.$$
While this penalty promotes smoothness, it does not control the desired sparsity as discussed in the previous section. We therefore elect to instead use the nonlinear sparsity penalty introduced in \citep{sparsityK} :
$\Omega_{\ell_0}(f)=\#\{j | \mathbb{E}_{(x,y)\sim \mu}\left|\frac{\partial f(x,y)}{\partial x_j}\right|^2 = 0 \}$, and its relaxation :
$$ \Omega_{S}(f)= \sum_{j=1}^{d_x} \sqrt{\mathbb{E}_{(x,y)\sim \mu}\left|\frac{\partial f(x,y)}{\partial x_j}\right|^2}.$$
\vskip -0.15in

\noindent
As discussed in \citep{sparsityK}, $\mathbb{E}_{(x,y)\sim \mu}\left|\frac{\partial f(x,y)}{\partial x_j}\right|^2=0$ implies that $f$ is constant with respect to variable $x_j$, if the function $f$ is continuously differentiable and the support of $\mu$ is connected.
These considerations motivate the following definition of the \emph{Sobolev Independence Criterion} (SIC): 
$$ \text{SIC}_{(L_1)^2}(p_{xy},p_{x}p_{y}) = \sup_{f\in \mathcal{F}} \mathbb{E}_{p_{xy}} {f} (x, y)-  \mathbb{E}_{p_{x}p_{y}} f( x, y) - \frac{\lambda}{2} \left(  \Omega_{S}(f) \right)^2- \frac{\rho}{2} \mathbb{E}_{\mu}f^2(x,y).$$
Note that we add a $\ell_1$-like penalty ($ \Omega_{S}(f)$ ) to ensure sparsity and an $\ell_2$-like penalty ($\mathbb{E}_{\mu}f^2(x,y)$) to ensure stability. This is similar to practices with linear models such as Elastic net.

\noindent
Here we will consider $\mu=p_{x}p_{y}$ (although we could also use $\mu= \frac{1}{2}(p_{xy}+p_{x}p_{y})$). Then, given samples $\{(x_i,y_i), i=1,\dots,N\}$ from the joint probability distribution $p_{xy}$ and \emph{iid} samples $\{(x_i,\tilde{y}_i), i=1,\dots,N\}$ from $p_{x}p_{y}$, SIC can be estimated as follows:
\vskip -0.2 in 
$$\widehat{\text{SIC}}_{(L_1)^2}(p_{xy},p_{x}p_{y}) = \sup_{f\in \mathcal{F}}  \frac{1}{N}\sum_{i=1}^N f (x_i, y_i)-  \frac{1}{N}\sum_{i=1}^Nf( x_i, \tilde{y}_i) - \frac{\lambda}{2} \left( \hat{ \Omega}_{S}(f) \right)^2- \frac{\rho}{2}\frac{1}{N}\sum_{i=1}^{N}f^2(x_i,\tilde{y}_i), $$
where
$\hat{ \Omega}_{S}(f) = \sum_{j=1}^{d_x} \sqrt{\frac{1}{N}\sum_{i=1}^{N} \left|\frac{  \partial f(x_i,\tilde{y}_i)}{\partial x_j} \right|^2 }. $

%
%
\begin{remark}
Throughout this paper we consider feature selection only on $x$ since $y$ is thought of as the response. Nevertheless, in many other problems one can perform feature selection on $x$ and $y$ jointly, which can be simply achieved by also controlling the sparsity of $\nabla_{y}f(x,y)$ in a similar way. 
\end{remark}

\section{Equivalent Forms of SIC with $\eta$-trick}\label{sec:etatrick}
As it was just presented, the SIC objective is a difficult function to optimize in practice. First of all, the expectation appears after the square root in the gradient penalties, resulting in a non-smooth term (since the derivative of square root is not continuous at 0). Moreover, the fact that the expectation is inside the nonlinearity introduces a gradient estimation bias when the optimization of the SIC objective is performed using stochastic gradient descent (i.e.\ using mini-batches).
We alleviate these problems (non-smoothness and biased expectation estimation) by making the expectation linear in the objective thanks to the introduction of auxiliary variables $\eta_j$ that will end up playing an important role in this work.
This is achieved thanks to a variational form of the square root that is derived from the following Lemma (which was used for a similar purpose as ours when alleviating the non-smoothness of mixed norms encountered in  multiple kernel learning and group sparsity norms): 

\begin{lemma} [\citep{Argyriou:2008},\citep{Bach:2011}]
Let $a_j,j=1\dots d$, $a_j> 0$ we have:
$\left(\sum_{j=1}^d \sqrt{a_j}\right)^2= \inf \{\sum_{j=1}^d \frac{a_j}{\eta_j}: \eta, \eta_j > 0 \sum_{j=1}^d \eta_j=1\},$
optimum achieved at $\eta_j = \sqrt{a_j}/\sum_j \sqrt{a_j}$.
\label{lem:etatrick}
\end{lemma}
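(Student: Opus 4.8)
The plan is to prove the identity by establishing the lower bound $\left(\sum_{j}\sqrt{a_j}\right)^2 \le \sum_{j} a_j/\eta_j$ for \emph{every} feasible $\eta$, and then exhibiting a feasible $\eta$ that attains it. For the lower bound I would apply the Cauchy--Schwarz inequality to the two vectors with components $u_j = \sqrt{a_j}/\sqrt{\eta_j}$ and $v_j = \sqrt{\eta_j}$, which are well defined since $\eta_j > 0$ and $a_j > 0$. Writing $\sum_j \sqrt{a_j} = \sum_j u_j v_j$ and using $\scal{u}{v}^2 \le \nor{u}^2\,\nor{v}^2$ gives
$$\left(\sum_{j=1}^d \sqrt{a_j}\right)^2 \le \left(\sum_{j=1}^d \frac{a_j}{\eta_j}\right)\left(\sum_{j=1}^d \eta_j\right) = \sum_{j=1}^d \frac{a_j}{\eta_j},$$
where the last equality uses the simplex constraint $\sum_j \eta_j = 1$. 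Since this holds for all feasible $\eta$, the left-hand side is a lower bound for the infimum.

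For attainment I would substitute the proposed optimizer $\eta_j^\star = \sqrt{a_j}/S$ with $S = \sum_{k=1}^d \sqrt{a_k}$, first checking feasibility: each $\eta_j^\star > 0$ because $a_j > 0$, and $\sum_j \eta_j^\star = S/S = 1$. A direct computation then yields $\sum_j a_j/\eta_j^\star = \sum_j S\sqrt{a_j} = S^2$, which matches the lower bound, so the infimum is attained and equals $\left(\sum_j \sqrt{a_j}\right)^2$. Equivalently, $\eta^\star$ is precisely the Cauchy--Schwarz equality case, where $u$ and $v$ are proportional, i.e.\ the ratio $a_j/\eta_j$ is constant across $j$.

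Alternatively one could \emph{derive} the minimizer rather than guess it, by observing that $\eta_j \mapsto a_j/\eta_j$ is convex on $(0,\infty)$, so the objective is convex over the simplex; the Lagrange stationarity condition $-a_j/\eta_j^2 + \nu = 0$ forces $\eta_j \propto \sqrt{a_j}$, and normalizing recovers $\eta^\star$. Since the lemma is elementary, the only point that genuinely requires care, and the one I would flag, is the role of the strict positivity hypothesis $a_j > 0$: it guarantees that $\eta^\star$ lies in the interior of the simplex, so the infimum is actually achieved as a minimum. Were some $a_j$ equal to $0$, the corresponding $\eta_j$ would be pushed to $0$ and the bound would hold only as an unattained infimum; under the stated assumption no such degeneracy occurs, completing the proof.
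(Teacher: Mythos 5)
Your proof is correct, and in fact the paper itself gives no proof of this lemma at all: it is quoted directly from the cited references (Argyriou et al.\ 2008, Bach et al.\ 2011), where the standard argument is exactly the Cauchy--Schwarz one you use. So your proposal matches the canonical approach: the bound $\left(\sum_j \sqrt{a_j}\right)^2 \le \left(\sum_j a_j/\eta_j\right)\left(\sum_j \eta_j\right)$ with $u_j = \sqrt{a_j/\eta_j}$, $v_j=\sqrt{\eta_j}$, followed by direct verification that $\eta_j^\star = \sqrt{a_j}/\sum_k \sqrt{a_k}$ attains it, is complete and airtight; your remark on why $a_j>0$ matters for attainment (versus an unattained infimum when some $a_j=0$) is a worthwhile observation that the paper's statement glosses over, and is relevant to why the authors later introduce the $\varepsilon$-perturbation $\Omega_{S,\varepsilon}$.

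One small slip in a side remark: the Cauchy--Schwarz equality case is proportionality of $u$ and $v$, which translates to $a_j/\eta_j^2$ being constant across $j$ (equivalently $\eta_j \propto \sqrt{a_j}$), not $a_j/\eta_j$ constant as you wrote --- at the optimum $a_j/\eta_j^\star = \sqrt{a_j}\sum_k\sqrt{a_k}$ varies with $j$. This does not affect the validity of your proof, since you establish attainment by direct substitution (and again, correctly, via the Lagrangian stationarity condition $\eta_j \propto \sqrt{a_j}$), but the parenthetical characterization should be corrected.
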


\noindent  We alleviate first the issue of non smoothness of the square root  by adding an $\varepsilon \in (0,1)$, and we define:
$\Omega_{S,\varepsilon}=  \sum_{j=1}^{d_x} \sqrt{\mathbb{E}_{(x,y)\sim \mu}\left|\frac{\partial f(x,y)}{\partial x_j}\right|^2+\varepsilon}  $.
Using Lemma \ref{lem:etatrick}  the nonlinear sparsity inducing  gradient penalty can be written as :
\vskip -0.3in
\begin{eqnarray*}
(\Omega_{S,\varepsilon}(f))^2=\inf\{      \sum_{j=1}^{d_x} \frac{\mathbb{E}_{p_x p_y} \left|\frac{\partial f(x,y)}{\partial x_j }\right|^2  +\varepsilon}{\eta_j}: \eta,\eta_j>0,\sum_{j=1}^{d_{x}} \eta_j=1\},
\end{eqnarray*}
\vskip -0.1in
\noindent where the optimum is achieved for :
$\eta_{j,\varepsilon}^*
= \frac{\beta_j}{\sum_{k=1}^{d_x} \beta_{k}},$
where $\beta^2_j=\mathbb{E}_{p_{x}p_{y}}  \left|\frac{\partial f(x,y)}{\partial x_j }\right|^2+\varepsilon$.
We refer to $\eta_{j,\varepsilon}^*$ as the normalized importance score of feature $j$. Note that $\eta_j$ is a distribution over the  features and gives a natural ranking between the features.
Hence, substituting $\Omega(S)(f)$ with $\Omega_{S,\varepsilon}(f)$ in its equivalent form we obtain the $\varepsilon$ perturbed SIC:
\vskip -0.2in
$$\text{SIC}_{(L_1)^2,\varepsilon}(p_{xy},p_{x}p_{y})=-\inf\{ L_{\varepsilon}(f,\eta): f \in \mathcal{F},\eta_j, \eta_j >0 , \sum_{j=1}^{d_{x}} \eta_j=1  \} $$
\vskip -0.2in
\noindent where $ L_{\varepsilon}(f,\eta)= -\Delta(f,p_{xy},p_{x}p_{y})+\frac{ \lambda}{2} \sum_{j=1}^{d_x} \frac{\mathbb{E}_{p_x p_y} \left|\frac{\partial f(x,y)}{\partial x_j }\right|^2+\varepsilon }{\eta_j} + \frac{\rho}{2} \mathbb{E}_{p_{x}p_{y}}f^2(x,y),
$ and $\Delta(f,p_{xy},p_{x}p_{y})=\mathbb{E}_{p_{xy}} {f} (x, y)-  \mathbb{E}_{p_{x}p_{y}} f( x, y) $.
\noindent Finally, SIC can be empirically estimated as
$$\widehat{\text{SIC}}_{(L_1)^2,\varepsilon}(p_{xy},p_{x}p_{y}) = 
-\inf \{ \hat{L}_{\varepsilon}(f,\eta): f \in \mathcal{F},\eta_j, \eta_j >0 , \sum_{j=1}^{d_{x}} \eta_j=1 \}$$
\vskip -0.2in
\noindent where $\hat{L}_{\varepsilon}(f,\eta)=-\hat{\Delta}(f,p_{xy},p_xp_y)   + \frac{\lambda}{2} \sum_{j=1}^{d_x} \frac{  \frac{1}{N} \sum_{i=1}^{N} \left|\frac{\partial f(x_i,\tilde{y}_i)}{\partial x_j }\right|^2 +\varepsilon }{\eta_j} + \frac{\rho}{2}\frac{1}{N}\sum_{i=1}^{N}f^2(x_i,\tilde{y}_i), $
and main the objective $\hat{\Delta}(f,p_{xy},p_{x}p_{y})=\frac{1}{N}\sum_{i=1}^N f (x_i, y_i)-  \frac{1}{N}\sum_{i=1}^Nf( x_i, \tilde{y}_i) $.


\begin{remark}[Group Sparsity] We can define similarly nonlinear group sparsity, if we would like our critic to depends on subsets of coordinates. 
Let $G_k, k=1,\dots,K $ be an  overlapping or non overlapping group : 
$\Omega_{gS}(f)= \sum_{k=1}^{K}\sqrt{ \sum_{j \in G_k}\mathbb{E}_{p_{x}p_{y}}  \left|\frac{\partial f(x,y)}{\partial x_j }\right|^2}$. The $\eta$-trick applies naturally. 
\end{remark}

\section{Convex Sobolev  Independence Criterion in Fixed Feature Spaces }\label{sec:ConvexSIC}
We will now specify the function space $\mathcal{F}$ in SIC and  consider in this Section critics of the form:
$$\mathcal{F}=\{ f|  f(x,y)= \scalT{u}{\Phi_{\omega}(x,y)}, \nor{u}_2\leq \gamma \},$$
where $\Phi_{\omega}: \pazocal{X}\times \pazocal{Y}\to \mathbb{R}^m$ is a fixed finite dimensional feature map.
We define the mean embeddings of the joint distribution $p_{xy}$ and product of marginals $p_{x}p_y$ as follow:
$\mu(p_{xy})= \mathbb{E}_{p_{xy}} [\Phi_{\omega}(x,y)], ~~\mu(p_{x}p_{y})= \mathbb{E}_{p_{x}p_{y}}[\Phi_{\omega}(x,y)] \in \mathbb{R}^m .$
Define the covariance  embedding of $p_{x}p_{y}$ as
$C(p_{x}p_{y})= \mathbb{E}_{p_{x}p_{y}}[\Phi_{\omega}(x,y)\otimes \Phi_{\omega}(x,y)] \in \mathbb{R}^{m\times m}$
and finally define the Gramian of derivatives embedding for coordinate $j$ as
$D_j(p_{x}p_{y})= \mathbb{E}_{p_{x}p_{y}} [\frac{ \partial \Phi_{\omega}(x,y)}{\partial x_j } \otimes\frac{ \partial \Phi_{\omega}(x,y)}{\partial x_j }] \in  \mathbb{R}^{m\times m}. $
\noindent We can write the constraint $\nor{u}_2\leq \gamma$ as the penalty term $-\tau \nor{u}^2$. 
Define  
$L_{\varepsilon}(u,\eta)= \scalT{u}{\mu(p_xp_y)- \mu(p_{xy})} + \frac{1}{2}\scalT{u}{\left( \lambda \sum_{j=1}^{d_x} \frac{D_{j}(p_{x}p_{y})+\varepsilon }{\eta_j} + \rho C(p_xp_y)+ \tau I_m \right) u}$.
Observe that : 
\vskip -0.2in
$$\text{SIC}_{(L^1)^2,\varepsilon}(p_{xy},p_{x}p_{y}) = - \inf \{ L_{\varepsilon}(u,\eta): u \in \mathbb{R}^m, \eta_j, \eta_j >0 , \sum_{j=1}^{d_{x}} \eta_j=1  \}.$$
\vskip -0.1in
\vskip -0.05in
\noindent We start by remarking that SIC is a form of gradient regularized maximum mean discrepancy \citep{MMD}.
Previous MMD work comparing joint and product of marginals did not use the concept of nonlinear sparsity. For example the Hilbert-Schmidt Independence Criterion (HSIC) \cite{HSIC} uses $\Phi_{\omega}(x,y)=\phi(x)\otimes \psi(y)$ with a constraint $||u||_{2}\leq 1$. CCA and related kernel measures of dependence \cite{VINOD1976147,NIPS2007_3340} use $L^2_2$ constraints $L^2_2(p_x)$ and $L^2_2(p_y)$ on each function space separately.

\noindent \textbf{Optimization Properties of Convex  SIC}
We analyze in this Section the Optimization properties of SIC. Theorem \ref{theo:ConvexityAndSmoothing} shows that the $\text{SIC}_{(L^1)^2,\varepsilon}$ loss function is jointly strictly convex in $(u,\eta)$ and hence admits a unique solution that solves a fixed point problem. 
\begin{theorem} [Existence of a solution, Uniqueness, Convexity and Continuity] Note that $L(u,\eta)=L_{\varepsilon=0}(u,\eta)$.
The following properties hold for the SIC loss:

\noindent 1) $L(u,\eta)$ is differentiable and  jointly convex in $(u,\eta)$. $L(u,\eta)$ is not continuous  for  $\eta$, such that $\eta_j=0$ for some $j$.

\noindent 2) Smoothing, Perturbed SIC:  For $\varepsilon \in (0,1)$, $L_{\varepsilon}(u,\eta)= L(u,\eta) + \frac{\lambda}{2} \sum_{j=1}^{d_x}\frac{\varepsilon}{\eta_j}$ is   jointly strictly convex and has compact level sets  on the probability simplex, and admits a unique minimizer $(u^*_{\varepsilon},\eta^*_{\varepsilon})$.

\noindent 3) The unique minimizer of $L_{\varepsilon}(u,\eta)$  is a solution of the following fixed point problem:
$u^*_{\varepsilon}= \left( \lambda \sum_{j=1}^{d_x} \frac{D_{j}(p_{x}p_{y}) }{\eta^*_j} + \rho C(p_xp_y)+ \tau I_m \right)^{-1}(\mu(p_{xy}) - \mu(p_xp_y) )$,
and 
$\eta_{j,\varepsilon}^* = \frac{\sqrt{\scalT{u^*_{\varepsilon}}{D_j(p_x p_y) u^*_{\varepsilon}}+\varepsilon}}{ \sum_{k=1}^{d_x}\sqrt{\scalT{u^*_{\varepsilon}}{D_k(p_{x}p_{y}) u^*_{\varepsilon}}+\varepsilon}}.$

\label{theo:ConvexityAndSmoothing}
\end{theorem}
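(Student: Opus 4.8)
The plan is to treat the three claims in sequence, the workhorse throughout being the observation that each coupling term $\scalT{u}{D_j u}/\eta_j$ is the \emph{perspective} of a convex quadratic. I would first record that $C(p_xp_y)$ and every $D_j(p_xp_y)$ are positive semidefinite, since they are expectations of rank-one outer products: $\scalT{u}{D_j u}=\mathbb{E}_{p_xp_y}\scalT{u}{\partial_{x_j}\Phi_\omega}^2\geq 0$, and likewise for $C$. Consequently $u\mapsto\scalT{u}{D_j u}$ is convex, and $(u,\eta_j)\mapsto\scalT{u}{D_j u}/\eta_j$ is exactly its perspective $\eta_j\,h(u/\eta_j)$ on $\eta_j>0$, hence jointly convex by the standard perspective-preserves-convexity lemma. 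Adding the linear term in $u$, the convex quadratics $\tfrac{\rho}{2}\scalT{u}{Cu}+\tfrac{\tau}{2}\nor{u}^2$, and summing over $j$ shows $L$ is jointly convex; differentiability on $\{\eta_j>0\}$ is immediate since each term is a ratio of polynomials with nonvanishing denominator. Discontinuity across a face $\eta_j=0$ follows because, for any $u$ with $\scalT{u}{D_j u}>0$, the term $\scalT{u}{D_j u}/\eta_j\to\infty$ as $\eta_j\downarrow 0$, so no finite continuous extension exists.

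For Part 2, since $L_\varepsilon=L+\tfrac{\lambda}{2}\sum_j \varepsilon/\eta_j$ with $L$ convex, it suffices to check strict convexity along an arbitrary nontrivial segment. I would compute the second derivative of $s\mapsto L_\varepsilon(u+s\Delta u,\eta+s\Delta\eta)$ and bound it below by $\tau\nor{\Delta u}^2+\lambda\varepsilon\sum_j \Delta\eta_j^2/\eta_j(s)^3$, discarding the nonnegative perspective and $C$-contributions. If $\Delta u\neq 0$ the first term is strictly positive, and if $\Delta u=0,\ \Delta\eta\neq 0$ the second is, so the second derivative is everywhere positive — this is precisely where the two elastic-net penalties $\tau>0$ and $\varepsilon>0$ are indispensable. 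For compact level sets, coercivity in $u$ comes from $\tfrac{\tau}{2}\nor{u}^2$ dominating the linear term, while the barrier $\tfrac{\lambda\varepsilon}{2}\sum_j 1/\eta_j\to\infty$ as any $\eta_j\downarrow 0$ confines sublevel sets to a compact subset of the open simplex. Weierstrass then yields existence, and strict convexity yields uniqueness of the minimizer $(u^*_\varepsilon,\eta^*_\varepsilon)$.

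For Part 3, because the minimizer lies in the interior of the simplex (Part 2), I would characterize it by partial first-order optimality. Setting $\nabla_u L_\varepsilon=0$ gives $\big(\lambda\sum_j D_j/\eta^*_j+\rho C+\tau I_m\big)u^*_\varepsilon=\mu(p_{xy})-\mu(p_xp_y)$, and inverting the matrix, positive-definite thanks to $\tau>0$, yields the stated formula for $u^*_\varepsilon$; the $\varepsilon$-terms drop out as they do not depend on $u$. Fixing $u=u^*_\varepsilon$ and minimizing the remaining $\eta$-dependent part $\tfrac{\lambda}{2}\sum_j(\scalT{u^*_\varepsilon}{D_j u^*_\varepsilon}+\varepsilon)/\eta_j$ over the simplex is exactly the instance of Lemma \ref{lem:etatrick} with $a_j=\scalT{u^*_\varepsilon}{D_j u^*_\varepsilon}+\varepsilon>0$, whose optimizer $\eta^*_j=\sqrt{a_j}/\sum_k\sqrt{a_k}$ is the claimed score.

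I expect the delicate point to be Part 2: establishing compactness of the sublevel sets over the open-simplex domain, which hinges on reading $\tfrac{\lambda\varepsilon}{2}\sum_j 1/\eta_j$ as an interior barrier and then combining it with coercivity in $u$ before invoking Weierstrass. The convexity itself reduces quickly to the perspective-function identity once $D_j\succeq 0$ is noticed, and the fixed-point equations are then a routine consequence of interiority and Lemma \ref{lem:etatrick}.
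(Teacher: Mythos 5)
Your proof is correct, and while it follows the same three-part skeleton as the paper's, the machinery differs in each part in ways worth noting. For Part 1 you invoke the perspective-function lemma to get joint convexity of $(u,\eta_j)\mapsto\scalT{u}{D_j u}/\eta_j$; the paper instead proves this fact by hand, computing the Hessian of $g(w,\eta)=w^{\top}Aw/\eta$ and completing the square to show $(w',\eta')^{\top}Hg(w,\eta)(w',\eta')=\frac{1}{\eta}\nor{A^{1/2}w'-\frac{\eta'}{\eta}A^{1/2}w}^2\geq 0$ — the same content, with your route citing a standard lemma rather than re-deriving it. For Part 2 your treatment is actually more complete than the paper's: the paper adds the term $\varepsilon(\eta'_j/\eta_j)^2$ inside the completed square and declares the Hessian of $L_\varepsilon$ positive definite, but the displayed quadratic form omits the $\tau I_m$ (and $\rho C$) contribution and is only asserted for directions with $\eta'_j>0$; as written it vanishes on directions with $\Delta\eta=0$ and $\Delta u$ in the common kernel of the $D_j$. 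Your segment argument, which retains $\tau\nor{\Delta u}^2$ and splits into the cases $\Delta u\neq 0$ versus $\Delta u=0,\ \Delta\eta\neq 0$, closes exactly this gap and makes explicit where $\tau>0$ and $\varepsilon>0$ are used. Likewise, the paper merely asserts compactness of level sets and existence ("strictly convex ... hence admits a unique minimizer"), whereas your coercivity-in-$u$ plus interior-barrier argument followed by Weierstrass is the missing justification. For Part 3 the paper forms the Lagrangian $\mathcal{L}(u,\eta,\xi)=L_\varepsilon(u,\eta)+\xi(\sum_j\eta_j-1)$ and solves the stationarity conditions directly, while you use interiority plus blockwise optimality in $\eta$ and then apply Lemma \ref{lem:etatrick} with $a_j=\scalT{u^*_\varepsilon}{D_j u^*_\varepsilon}+\varepsilon$; the two are equivalent, but your version elegantly reuses the $\eta$-trick lemma already established in the paper instead of re-deriving its optimality condition via a multiplier.
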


\noindent The following Theorem shows that a solution of the unperturbed SIC problem can be obtained from the smoothed $\text{SIC}_{(L_1)^2,\varepsilon}$ in the limit $\varepsilon \to 0$:

\begin{theorem}[From Perturbed SIC to SIC]
Consider a sequence $\varepsilon_{\ell}$, $\varepsilon_{\ell}\to 0$ as $\ell \to \infty$ , and consider a sequence of minimizers $(u^*_{\varepsilon_{\ell}},\eta^*_{\ell})$  of $L_{\varepsilon_{\ell}}(u,\eta)$, and let $(u^*,\eta^*)$ be the limit of this sequence, then $(u^*,\eta^*)$ is a minimizer of $L(u,\eta)$.
\label{theo:SICoptim}
\end{theorem}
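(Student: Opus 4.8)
The plan is to run a standard epigraphical / $\Gamma$-convergence argument driven by the monotone approximation $L_{\varepsilon}\downarrow L$ together with lower semicontinuity of $L$. Three facts will do all the work. First, by Theorem~\ref{theo:ConvexityAndSmoothing}(2) we may write $L_{\varepsilon}(u,\eta)=L(u,\eta)+\frac{\lambda}{2}\sum_{j=1}^{d_x}\frac{\varepsilon}{\eta_j}$, so that for every admissible $(u,\eta)$ with all $\eta_j>0$ we have $L_{\varepsilon}(u,\eta)\geq L(u,\eta)$ and $L_{\varepsilon}(u,\eta)\to L(u,\eta)$ as $\varepsilon\to 0$. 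Second, each pair $(u^*_{\varepsilon_\ell},\eta^*_\ell)$ is by construction the minimizer of $L_{\varepsilon_\ell}$. Third, extending $L$ to the boundary of the simplex by the perspective convention ($0/0=0$ and $a/0=+\infty$ for $a>0$), the map $(u,\eta)\mapsto \scalT{u}{D_j(p_xp_y)u}/\eta_j$ is the closed perspective of the convex quadratic $u\mapsto\scalT{u}{D_j(p_xp_y)u}$, hence jointly convex and lower semicontinuous; adding the remaining continuous terms shows $L$ is lower semicontinuous everywhere (this is exactly the price for the discontinuity at $\eta_j=0$ noted in Theorem~\ref{theo:ConvexityAndSmoothing}(1)).

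First I would establish an upper bound. Fix any competitor $(u,\eta)$ in the open simplex. Using the minimality of $(u^*_{\varepsilon_\ell},\eta^*_\ell)$ and $L_{\varepsilon_\ell}\geq L$,
\begin{equation*}
L(u^*_{\varepsilon_\ell},\eta^*_\ell)\;\leq\; L_{\varepsilon_\ell}(u^*_{\varepsilon_\ell},\eta^*_\ell)\;\leq\; L_{\varepsilon_\ell}(u,\eta)\;=\;L(u,\eta)+\frac{\lambda\varepsilon_\ell}{2}\sum_{j=1}^{d_x}\frac{1}{\eta_j},
\end{equation*}
and letting $\ell\to\infty$ gives $\limsup_{\ell}L(u^*_{\varepsilon_\ell},\eta^*_\ell)\leq L(u,\eta)$. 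I would then pass to the limit inside $L$ by lower semicontinuity: since $(u^*_{\varepsilon_\ell},\eta^*_\ell)\to(u^*,\eta^*)$, the third fact yields $L(u^*,\eta^*)\leq\liminf_{\ell}L(u^*_{\varepsilon_\ell},\eta^*_\ell)$. Chaining the two inequalities gives $L(u^*,\eta^*)\leq L(u,\eta)$ for every $(u,\eta)$ with $\eta_j>0$, i.e. $L(u^*,\eta^*)\leq\inf_{\eta_j>0}L$, a quantity that is finite because $L$ is finite at any interior point.

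The last step, and the main obstacle, is to upgrade this to a genuine minimization over the whole closed feasible set, since $L$ is discontinuous precisely on the boundary where some $\eta_j=0$ and the limit $\eta^*$ may land there. I would treat an arbitrary boundary competitor $(u,\eta)$ as follows: if $L(u,\eta)=+\infty$ the desired inequality is trivial, and otherwise finiteness forces $\scalT{u}{D_j(p_xp_y)u}=0$ for every $j$ with $\eta_j=0$. Introducing the interior path $\eta^{(t)}=(1-t)\eta+\frac{t}{d_x}(1,\dots,1)$, which lies in $\{\eta_j>0\}$ for $t\in(0,1]$, the terms indexed by the boundary coordinates vanish identically while the interior terms vary continuously, so $L(u,\eta^{(t)})\to L(u,\eta)$ as $t\to 0^{+}$; since each $L(u,\eta^{(t)})\geq L(u^*,\eta^*)$ by the interior bound above, I conclude $L(u,\eta)\geq L(u^*,\eta^*)$. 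Hence $(u^*,\eta^*)$ minimizes $L$ over the closed simplex, which is exactly the claim. I expect the only delicate points to be the lower semicontinuity of the perspective term and this boundary passage; the coercivity supplied by the $\frac{\tau}{2}\nor{u}^2$ contribution (which keeps the limit $u^*$ finite and $L$ bounded below) and the convexity already recorded in Theorem~\ref{theo:ConvexityAndSmoothing} make the remaining bookkeeping routine.
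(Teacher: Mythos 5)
Your proof is correct, but it follows a genuinely different route from the paper's. The paper (following Argyriou et al.\ 2008) first eliminates $\eta$ by partial minimization: using the $\eta$-trick of Lemma~\ref{lem:etatrick} it forms the value function $S_{\varepsilon}(u)=\min_{\eta}L_{\varepsilon}(u,\eta)= -\scalT{u}{\delta}+\frac{1}{2}\scalT{u}{(\rho C+\tau I_m)u}+\frac{\lambda}{2}\bigl(\sum_{j}\sqrt{\scalT{u}{D_j u}+\varepsilon}\bigr)^2$, in which the $1/\eta_j$ singularity has disappeared, so $S_{\varepsilon}(u)$ is jointly \emph{continuous} in $(\varepsilon,u)$; the argument is then a one-line value-convergence statement ($\min_u S_{\varepsilon}$ decreases monotonically to $\min_u S_0$, and continuity gives $S_0(u^*)=\bar S$), with no boundary or semicontinuity issues at all. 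You instead work directly with the joint objective $L_{\varepsilon}(u,\eta)$, which forces you to extend $L$ to the closed simplex via the closed perspective convention, prove joint lower semicontinuity, run the sandwich $L(u^*_{\varepsilon_\ell},\eta^*_\ell)\leq L_{\varepsilon_\ell}(u,\eta)$, and finish with a path argument for boundary competitors. What your route buys is rigor exactly where the paper is terse: the theorem's conclusion concerns the \emph{pair} $(u^*,\eta^*)$, and the limit $\eta^*$ may land on the simplex boundary where $L$ is discontinuous; your lsc extension makes precise in what sense such a point is a minimizer, whereas the paper's proof only directly establishes that $u^*$ minimizes $S_0$ and leaves the identification of $\eta^*$ as an optimal score vector implicit. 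What the paper's route buys is brevity and reuse of an already-proven lemma: partial minimization converts the singular term into a continuous one, so the entire limiting argument avoids the perspective-function machinery and the boundary passage that constitute the bulk of your work. Both arguments are sound; yours is the more self-contained and careful, the paper's the more economical.
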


\noindent \textbf{Interpretability of SIC.}
The following corollary shows that SIC can be written in terms of the importance scores of the features, since at optimum the main objective is proportional to  the constraint term.
It is  to the best of our knowledge the first dependency criterion that decomposes in the sum of contributions of each coordinate, and hence it is an interpretable dependency measure.
Moreover, $\eta^*_j$ are normalized importance scores of  each feature $j$, and their ranking can be used to assess feature importance.

\begin{corollary}[Interpretability of Convex SIC ]  Let $(u^*,\eta^*)$ be the limit defined in Theorem  \ref{theo:SICoptim}. Define $f^*(x,y)=\scalT{u^*}{\Phi_{\omega}(x,y)}$, and $\nor{f^*}_{\mathcal{F}}=\nor{u^*}$. We have that
\begin{eqnarray*}
 \text{SIC}_{(L^1)^2} (p_{xy},p_xp_y) &=& \frac{1}{2}\left(\mathbb{E}_{p_{xy}} f^*(x,y) -  \mathbb{E}_{p_{x}p_{y}} f^*(x,y)\right)\\
 & =& \frac{\lambda}{2} \left(\sum_{j=1}^{d_x} \sqrt{\mathbb{E}_{p_xp_y} |\frac{\partial f^*(x,y)}{\partial x_j}|^2 }\right)^2 +\frac{\rho}{2} \mathbb{E}_{p_xp_y} f^{*,2}(x,y) + \frac{\tau}{2} ||f^*||^2_{\mathcal{F}}.
 \end{eqnarray*}
Moreover, 
$\sqrt{  \mathbb{E}_{p_xp_y}  |\frac{\partial f^*(x,y)}{\partial x_j}|^2}  = \eta^*_{j} \Omega_{S,L_1}(f^*) \text{ and } \sum_{j=1}^{d_x} \eta_j=1 .$
The terms $\eta_j^*$  can be seen as quantifying how much dependency as measured by SIC can be explained by a coordinate $j$. Ranking of $\eta^*_j$ can be used to rank influence of coordinates. 
 \label{cor:InterpretableSIC} 
\end{corollary}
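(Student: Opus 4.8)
The plan is to derive every claim from the first-order stationarity condition in $u$ at the limiting minimizer $(u^*,\eta^*)$, and then translate the resulting quadratic forms into expectations of $f^*$ and its derivatives. I would first recall that by Theorem \ref{theo:SICoptim} the limit $(u^*,\eta^*)$ minimizes $L=L_{\varepsilon=0}$, so that $\text{SIC}_{(L^1)^2}=-L(u^*,\eta^*)$, and that the fixed-point relation of Theorem \ref{theo:ConvexityAndSmoothing} passes to the limit, giving $Mu^*=\mu(p_{xy})-\mu(p_xp_y)$ with $M=\lambda\sum_{j}\frac{D_j(p_xp_y)}{\eta^*_j}+\rho C(p_xp_y)+\tau I_m$. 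Writing $\Delta_\mu=\mu(p_{xy})-\mu(p_xp_y)$ and substituting $\Delta_\mu=Mu^*$ into $L(u^*,\eta^*)=-\scalT{u^*}{\Delta_\mu}+\frac12\scalT{u^*}{Mu^*}$ collapses it to $-\frac12\scalT{u^*}{\Delta_\mu}$, so that $\text{SIC}_{(L^1)^2}=\frac12\scalT{u^*}{\Delta_\mu}$.

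Next I would unfold $\scalT{u^*}{\Delta_\mu}$ in two equivalent ways. Using $\Delta_\mu$ directly, together with $\scalT{u^*}{\mu(p_{xy})}=\mathbb{E}_{p_{xy}}f^*$ and $\scalT{u^*}{\mu(p_xp_y)}=\mathbb{E}_{p_xp_y}f^*$ (which hold by linearity of the mean embedding, since $f^*=\scalT{u^*}{\Phi_\omega}$), yields the first displayed equality $\text{SIC}_{(L^1)^2}=\frac12(\mathbb{E}_{p_{xy}}f^*-\mathbb{E}_{p_xp_y}f^*)$. Using instead $\scalT{u^*}{\Delta_\mu}=\scalT{u^*}{Mu^*}$ and expanding $M$ gives $\frac{\lambda}{2}\sum_j\frac{\scalT{u^*}{D_j u^*}}{\eta^*_j}+\frac{\rho}{2}\scalT{u^*}{Cu^*}+\frac{\tau}{2}\nor{u^*}^2$; here I would substitute the identities $\scalT{u^*}{D_j(p_xp_y)u^*}=\mathbb{E}_{p_xp_y}|\frac{\partial f^*}{\partial x_j}|^2$ and $\scalT{u^*}{C(p_xp_y)u^*}=\mathbb{E}_{p_xp_y}f^{*,2}$, which follow immediately from the definitions of $D_j$ and $C$ as (derivative) covariance embeddings and from $\frac{\partial f^*}{\partial x_j}=\scalT{u^*}{\frac{\partial \Phi_\omega}{\partial x_j}}$, together with $\nor{u^*}^2=\nor{f^*}^2_{\mathcal F}$.

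It then remains to collapse $\sum_j \frac{\mathbb{E}_{p_xp_y}|\partial f^*/\partial x_j|^2}{\eta^*_j}$ into $\left(\sum_j\sqrt{\mathbb{E}_{p_xp_y}|\partial f^*/\partial x_j|^2}\right)^2$. Setting $a_j=\mathbb{E}_{p_xp_y}|\frac{\partial f^*}{\partial x_j}|^2=\scalT{u^*}{D_j u^*}$, the limiting form of $\eta^*_j$ from Theorem \ref{theo:ConvexityAndSmoothing} is $\eta^*_j=\sqrt{a_j}/\sum_k\sqrt{a_k}$, whence $a_j/\eta^*_j=\sqrt{a_j}\sum_k\sqrt{a_k}$ and summing over $j$ gives $(\sum_j\sqrt{a_j})^2$; this is exactly Lemma \ref{lem:etatrick} evaluated at the optimum. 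The same closed form gives the last claim at once: $\eta^*_j\,\Omega_{S,L_1}(f^*)=\frac{\sqrt{a_j}}{\sum_k\sqrt{a_k}}\sum_k\sqrt{a_k}=\sqrt{a_j}=\sqrt{\mathbb{E}_{p_xp_y}|\partial f^*/\partial x_j|^2}$, while $\sum_j\eta^*_j=1$ is inherited from the simplex constraint.

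The main obstacle is justifying these manipulations in the $\varepsilon\to0$ limit rather than at fixed $\varepsilon$. Two points need care: first, that the stationarity relation $Mu^*=\Delta_\mu$ and the closed form of $\eta^*_j$ survive the limit, which I would obtain from Theorem \ref{theo:SICoptim} together with continuity in $\varepsilon$ of the fixed-point maps of Theorem \ref{theo:ConvexityAndSmoothing}; second, the apparent $0/0$ in the term $a_j/\eta^*_j$ for unselected coordinates, where $a_j=0$ and $\eta^*_j=0$. I would handle the latter by carrying the identity $a_{j}/\eta^*_{j}=\sqrt{a_{j}}\sum_k\sqrt{a_k}$ at the level of the $\varepsilon$-smoothed problem, where $a_j+\varepsilon>0$ keeps every denominator strictly positive, and only then letting $\varepsilon\to0$, so that vanishing coordinates contribute $0$ and the decomposition stays valid. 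In the non-degenerate case $\Delta_\mu\neq0$ we have $u^*\neq0$ and hence $\sum_k\sqrt{a_k}>0$, so the normalization defining $\eta^*$ is well posed.
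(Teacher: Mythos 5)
Your proposal is correct and follows essentially the same route as the paper's proof: the first-order stationarity condition $\left( \lambda \sum_{j} D_j(p_xp_y)/\eta^*_j + \rho C(p_xp_y) + \tau I_m \right)u^* = \mu(p_{xy})-\mu(p_xp_y)$, the translation of the quadratic forms $\scalT{u^*}{D_j u^*}$ and $\scalT{u^*}{C u^*}$ into expectations of $|\partial f^*/\partial x_j|^2$ and $f^{*,2}$, the closed form of $\eta^*_j$ from the $\eta$-trick to collapse $\sum_j a_j/\eta^*_j$ into $\bigl(\sum_j \sqrt{a_j}\bigr)^2$, and the final passage $\varepsilon \to 0$. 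If anything, you are slightly more careful than the paper in insisting that the $\eta$-trick identity be carried with the $+\varepsilon$ terms intact through the smoothed problem (where all denominators are positive) before taking the limit, which cleanly handles the $0/0$ issue for unselected coordinates.
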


\noindent Thanks to the joint convexity and  the smoothness of the perturbed SIC, we can solve convex empirical SIC using alternating minimization on $u$ and $\eta$ or block coordinate descent using first order methods such as gradient descent on $u$ and mirror descent \citep{Beck:2003} on $\eta$ that are known to be globally convergent in this case (see Appendix \ref{alg:ConvexSIC} for more details).  



\section{Non Convex Neural SIC with Deep ReLU Networks}
\label{sec:NeuralSIC}
While Convex SIC enjoys a lot of theoretical properties, a crucial short-coming  is the need to choose a feature map $\Phi_{\omega}$ that essentially goes back to the choice of a kernel in classical kernel methods.
As an alternative, we propose to learn the feature map as a deep neural network. The architecture of the network can be problem dependent, but we focus here on a particular architecture: Deep ReLU Networks with biases removed. As we show below, using our sparsity inducing gradient penalties with such networks, results in input sparsity at the level of the witness function $f$ of SIC. This is desirable since it allows for an interpretable model, similar to the effect of  Lasso with Linear models, our sparsity inducing gradient penalties result in a nonlinear self-explainable witness function $f$ \citep{selfexplainable}, with explicit sparse dependency on the inputs.
 
\noindent \textbf{Deep ReLU Networks with no biases, homogeneity and Input Sparsity via Gradient Penalties.}
We start by invoking the Euler Theorem for homogeneous functions:
\vskip -0.4in
\begin{theorem}[Euler Theorem for Homogeneous Functions] A continuously differentiable function $f$ is defined as homogeneous of degree $k$ if $f(\lambda x)=\lambda^k f(x), \forall \lambda \in \mathbb{R}$. The Theorem states that $f$ is homogeneous of degree $k$ if and only if
$k f(x)= \scalT{\nabla_xf(x)}{x}=\sum_{j=1}^{d_{x}}\frac{\partial f(x)}{\partial x_j}x_j.$
\end{theorem}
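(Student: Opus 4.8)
The plan is to prove the two implications separately: the forward direction by differentiating the homogeneity identity in the scaling parameter, and the converse by integrating a simple first-order ODE along rays through the origin.

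For the forward implication, I would start from the defining identity $f(\lambda x)=\lambda^k f(x)$, regard both sides as functions of $\lambda$ for fixed $x$, and differentiate with respect to $\lambda$. The chain rule applied to the left-hand side yields $\scalT{\nabla_x f(\lambda x)}{x}=\sum_{j=1}^{d_x}\frac{\partial f}{\partial x_j}(\lambda x)\,x_j$, while the right-hand side gives $k\lambda^{k-1}f(x)$. Evaluating the resulting equality at $\lambda=1$ produces exactly $\scalT{\nabla_x f(x)}{x}=k f(x)$. This direction is routine and only uses that $f$ is $C^1$, so that the chain rule is valid.

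For the converse, I would fix $x$ and define the single-variable function $g(\lambda)=f(\lambda x)$, so that $g'(\lambda)=\scalT{\nabla_x f(\lambda x)}{x}$. Applying the hypothesis $\scalT{\nabla_x f(z)}{z}=k f(z)$ at the point $z=\lambda x$ gives $\lambda\,\scalT{\nabla_x f(\lambda x)}{x}=k f(\lambda x)$, i.e.\ $\lambda g'(\lambda)=k g(\lambda)$. To integrate this cleanly I would introduce the auxiliary function $h(\lambda)=\lambda^{-k}g(\lambda)$ for $\lambda>0$ and compute $h'(\lambda)=\lambda^{-k-1}\bigl(\lambda g'(\lambda)-k g(\lambda)\bigr)=0$. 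Hence $h$ is constant on $(0,\infty)$, and evaluating at $\lambda=1$ gives $h(\lambda)=h(1)=f(x)$, which is precisely $f(\lambda x)=\lambda^k f(x)$.

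The main obstacle is the range of $\lambda$: the ODE argument in the converse naturally delivers the identity only for $\lambda>0$ (positive homogeneity), since $\lambda^{-k}$ is not well-behaved at $\lambda\le 0$, whereas the statement as written quantifies over all $\lambda\in\R$. I would resolve this by either (i) restricting to positive homogeneity, which is the case relevant for bias-free ReLU networks since the domain of interest is a cone, or (ii) treating $\lambda=0$ by continuity and $\lambda<0$ only when $k$ is an integer so that $\lambda^k$ is defined, reflecting across the origin. A secondary point worth stating explicitly is that the domain of $f$ must be a cone (closed under the admissible scalings) so that the ray $\lambda\mapsto\lambda x$ remains in the domain, which holds for $\R^{d_x}$.
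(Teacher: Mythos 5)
Your proof is correct in both directions, and it is worth noting up front that the paper itself contains no proof of this statement to compare against: Euler's theorem is invoked there as a classical fact, solely to justify the decomposition $f(x,y)=\scalT{\nabla_x f(x,y)}{x}+\scalT{\nabla_y f(x,y)}{y}$ for bias-free ReLU networks. Your argument is the standard one — differentiate $f(\lambda x)=\lambda^k f(x)$ in $\lambda$ and evaluate at $\lambda=1$ for the forward direction; for the converse, reduce to the ray ODE $\lambda g'(\lambda)=k g(\lambda)$ with $g(\lambda)=f(\lambda x)$ and integrate via the constant function $h(\lambda)=\lambda^{-k}g(\lambda)$ — and both steps are sound under the $C^1$ hypothesis. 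Your caveat about the quantifier is not a defect of your proof but a genuine imprecision in the paper's statement: the converse of Euler's identity delivers only positive homogeneity ($\lambda>0$, extendable to $\lambda=0$ by continuity when $k>0$), and positive homogeneity is also exactly what holds in the intended application, since a ReLU network with no biases satisfies $f(\lambda x)=\lambda f(x)$ only for $\lambda\ge 0$ (because $\sigma(\lambda t)=\lambda\sigma(t)$ fails for $\lambda<0$); the paper's ``$\forall \lambda \in \R$'' should be read as $\lambda>0$. One further caveat you could add for completeness: ReLU networks are piecewise linear and not $C^1$, so when the paper applies the theorem to $\mathcal{F}_{ReLU}$ the identity $f(x,y)=\scalT{\nabla_x f(x,y)}{x}+\scalT{\nabla_y f(x,y)}{y}$ holds off the (measure-zero) non-differentiability set rather than everywhere — a gloss the paper also makes silently.
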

\vskip -0.1 in 

\noindent Now consider \emph{deep ReLU networks with biases removed} for any number of layers $L$:
 $\mathcal{F}_{ReLu}=\{ f| f(x,y)=\scalT{u}{\Phi_{\omega}(x)}, \text{ where } \Phi_{\omega}(x,y)= \sigma (W_L \dots \sigma(W_2\sigma(W_1[x,y]))), u \in \mathbb{R}^m, \Phi_{\omega}: \mathbb{R}^{d_x+d_{y}}\to \mathbb{R}^m \},$ 
 where $\sigma(t)=\max(t,0), W_j$ are linear weights.
 Any  $f \in \mathcal{F}_{ReLU}$ is clearly homogeneous of degree $1$. As an immediate consequence of Euler Theorem we then have: $f(x,y)=\scalT{\nabla_x f(x,y)}{x} + \scalT{\nabla_y f(x,y)}{y}$. The first term is similar to a linear term in a linear model, the second term can be seen as a bias. Using our sparsity-inducing gradient penalties with such networks guarantees that on average on the support of a dominant measure the gradients with respect to $x$ are sparse. Intuitively, the gradients wrt $x$ act like the weight in linear models, and our sparsity inducing gradient penalty act like the $\ell_1$ regularization of Lasso. The main advantage compared to Lasso is that we have a highly nonlinear decision function, that has better capacity of capturing dependencies between $X$ and $Y$.



\noindent \textbf{Non-convex SIC with Stochastic Block Coordinate Descent (BCD).}
We define the empirical non convex $SIC_{(L^1)^2}$ using this function space $\mathcal{F}_{\text{ReLu}}$ as follows:
\vskip -0.31in 
 \begin{align*}
&\widehat{\text{SIC}}_{(L^1)^2}(p_{xy},p_{x}p_{y})= - \inf\{ \hat{L}(f_{\theta},\eta): f_{\theta} \in \bm{\mathcal{F}_{ReLU}}, \eta_j, \eta_j >0 , \sum_{j=1}^{d_{x}} \eta_j=1\},
\end{align*}
\vskip -0.20in 
\noindent where $\theta=(vec(W_1)\dots vec(W_L),u )$  are the network parameters.
Algorithm \ref{alg:sBCD} in Appendix \ref{app:SICNeural} summarizes our stochastic BCD algorithm for training the Neural SIC. 
The algorithm consists of SGD updates to $\theta$ and mirror descent updates to $\eta$.

\noindent \textbf{Boosted SIC.} When training Neural SIC, we can obtain different critics $f_{\ell}$ and importance scores $\eta_{\ell}$, by varying random seeds or hyper-parameters (architecture, batch size etc). Inspired by importance scores in random forest, we define \textbf{Boosted SIC} as the arithmetic mean or the geometric mean of $\eta_{\ell}$.


\section{FDR Control and the Holdout Randomization Test/ Knockoffs.}\label{sec:HRT}
 Controlling the False Discovery Rate (FDR) in Feature Selection is an important problem for reproducible discoveries. 
 In a nutshell, for a feature selection problem given the ground-truth set of features $\pazocal{S}$, and a feature selection method such as SIC that gives a candidate set $\hat{S}$, our goal is to maximize the TPR (True Positive Rate) or the power, and to keep the False Discovery Rate (FDR) under Control. TPR and FDR are defined as follows:  
\begin{equation}
\label{eqn:power}
\text{TPR} \coloneqq \mathbb{E}\left[\frac{\#\{ i : i \in \hat{S} \cap \pazocal{S} \}}{\#\{ i : i \in \pazocal{S}\}}\right] ~~ \text{FDR} \coloneqq \mathbb{E}\left[\frac{\#\{ i : i \in \hat{{S}} \backslash \pazocal{S} \}}{\#\{ i : i \in \hat{{S}} \}}\right].
\end{equation}

\noindent We explore in this paper two methods that provably control the FDR: 1) The Holdout Randomization Test (HRT) introduced in \citep{HRT}, that we specialize for SIC in Algorithm \ref{alg:hrt_sic};
 2) Knockoffs introduced in \citep{Candes2018PanningFG} that can be used with any basic feature selection method such as Neural SIC, and guarantees provable FDR control.
 
\noindent \textbf{\textit{HRT-SIC.}} We are  interested in measuring the conditional dependency between a feature  $x_j$  and the response variable  $y$ conditionally on the other features noted $x_{-j}$. Hence we have the following null hypothesis: 
$H_0: x_j  \indep y ~| x_{-j} \iff p_{xy}=p_{x_j|x_{-j}}p_{y|x_{-j}}p_{x_{-j}}$.
In order to simulate the null hypothesis, we propose to use generative models for sampling from $x_j|x_{-j}$ (See Appendix \ref{sec:Generators}).    
The principle in HRT \citep{HRT} that we specify here for SIC  in Algorithm \ref{alg:hrt_sic} (given in Appendix \ref{app:SICNeural}) is the following: instead of refitting SIC under $H_0$, we evaluate the mean of the witness function of SIC on a holdout set sampled under $H_0$ (using conditional generators for $R$ rounds). The deviation of the mean of the witness function under $H_0$ from its mean on a holdout from the real distribution gives us $p$-values. We use the Benjamini-Hochberg  \citep{Benjamini:1995ram} procedure on those $p$-values to achieve a target FDR. We apply HRT-SIC on a shortlist of pre-selected features per their ranking of $\eta_j$.

\noindent \textbf{\textit{Knockoffs-SIC.}} Knockoffs \cite{barber2015controlling} work by finding control variables called knockoffs $\tilde{x}$ that mimic the behavior of the real features $x$ and provably control the FDR \citep{Candes2018PanningFG}. We use here Gaussian knockoffs \citep{Candes2018PanningFG} and train SIC on the concatenation of $[x,\tilde{x}]$, i.e we train $SIC([X;\tilde{X}],Y)$ and obtain $\eta$ that has now twice the dimension $d_x$, i.e for each real feature $j$, there is the real importance score $\eta_j$ and the knockoff importance score $\eta_{j+d_{x}}$. knockoffs-SIC consists in using the statistics $W_j=\eta_j-\eta_{j+d_x}$ and the knockoff filter \citep{Candes2018PanningFG} to select features based on the sign of $W_j$ (See Alg.\ \ref{alg:modelx_sic} in Appendix). 

\section{Relation to Previous Work}

\noindent \textbf{Kernel/Neural Measure of Dependencies.} As discussed earlier SIC can be seen as a  \emph{sparse} gradient regularized MMD \cite{MMD,Arbel:2018} and relates to the Sobolev Discrepancy of \citep{SobolevGAN,SobolevDescent}. Feature selection with MMD  was introduced in  \citep{mmdFeatureSelection} and is based on backward elimination of features by recomputing MMD on the ablated vectors. SIC has the advantage of fitting one critic that has interpretable feature scores. Related to the MMD is the Hilbert Schmidt Independence Criterion (HSIC) and other variants of kernel dependency measures introduced in \citep{HSIC,NIPS2007_3340}.
None of those criteria has a nonparametric sparsity constraint on its witness function that allows for explainability and feature selection. Other Neural measures of dependencies such as MINE \citep{mine} estimate the KL divergence using neural networks, or that of \citep{ozair2019wasserstein} that estimates a proxy to the Wasserstein distance using Neural Networks.

\noindent \textbf{Interpretability, Sparsity, Saliency and Sensitivity Analysis.}
Lasso and elastic net  \citep{hastie01statisticallearning} are interpretable linear models  that exploit sparsity, but are limited to linear relationships. Random forests  \citep{Breiman:2001} have a heuristic for determining  feature  importance and are successful in practice as they can capture nonlinear relationships similar to SIC. We believe SIC can potentially leverage the deep learning toolkit for going beyond tabular data where random forests excel, to more structured data such as time series or graph data. Finally, SIC relates to saliency based post-hoc interpretation of deep models such as \citep{GradInput,saliency,RelvPropagation}. While those method use the gradient information for a post-hoc analysis, SIC incorporates this information to guide the learning towards the important features. As discussed in Section 2.1 many recent works introduce deep networks with input sparsity control through a learned gate or a penalty on the weights of the network \citep{Feng2017SparseInputNN,penalizedNNDropOne,gating}. SIC exploits a stronger notion of sparsity that leverages the relationship between the different covariates.

%
%
%
%
%

\section{Experiments}\label{sec:exp}
\noindent \textbf{Synthetic Data Validation.} We first validate our methods and compare them to baseline models in simulation studies on synthetic datasets where the ground truth is available by construction.
For this we generate the data according to a model $y = f(x) + \epsilon$  where the model $f(\cdot)$ and the noise $\epsilon$ define the specific synthetic dataset (see Appendix \ref{subapp:datadetails}).
In particular, the value of $y$ only depends on a subset of features $x_i$, $i=1,\ldots,p$ through $f(\cdot)$, and performance is quantified in terms of TPR and FDR in discovering them among the irrelevant features.
We experiment with two datasets: 
\noindent \textbf{A) Complex multivariate synthetic data (SinExp)}, which is generated from a complex multivariate model proposed in \cite{feng2017sparse} Sec 5.3, where 6 \emph{ground truth} features $x_i$ out of 50 generate the output $y$ through a non-linearity involving the product and composition of the $\cos$, $\sin$ and $\exp$ functions (see Appendix \ref{subapp:datadetails}).
We therefore dub this dataset SinExp.
To increase the difficulty even further, we introduce a pairwise correlation between all features of 0.5.
In Fig.\ \ref{fig:sinexp} we show results for datasets of 125 and 500 samples repeated 100 times comparing performance of our models with the one of two baselines: Elastic Net (EN) and Random Forest (RF).
\noindent \textbf{B) Liang Dataset.}
We show results on the benchmark dataset proposed by \cite{liang2018bayesian}, specifically the \emph{generalized} Liang dataset matching most of the setup from \cite{HRT} Sec 5.1.
We provide dataset details and results in Appendix \ref{subapp:datadetails} (Results in Figure \ref{fig:liang}).

\begin{figure}[ht!]
  \centering
  \subfloat{\includegraphics[scale=0.23]{./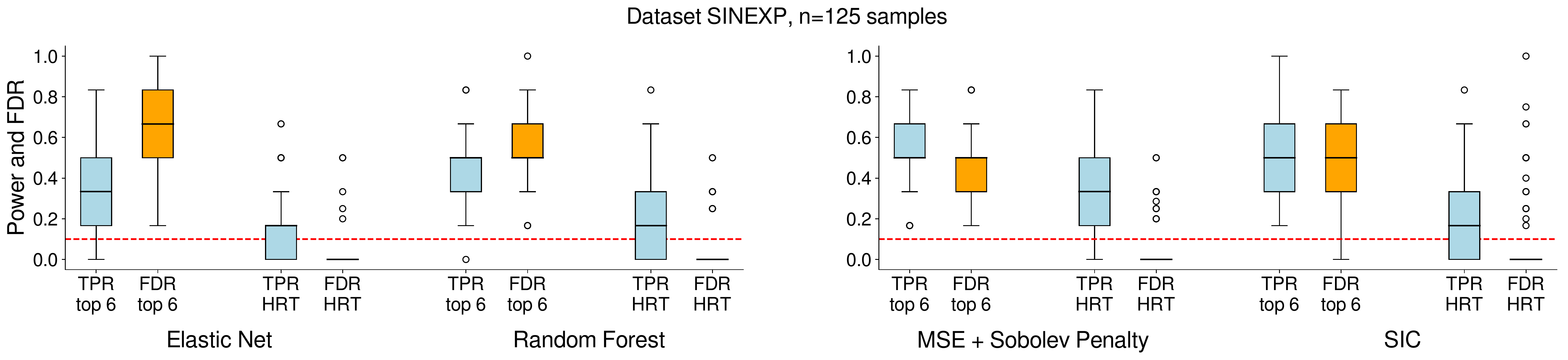}}\\
  \subfloat{\includegraphics[scale=0.23]{./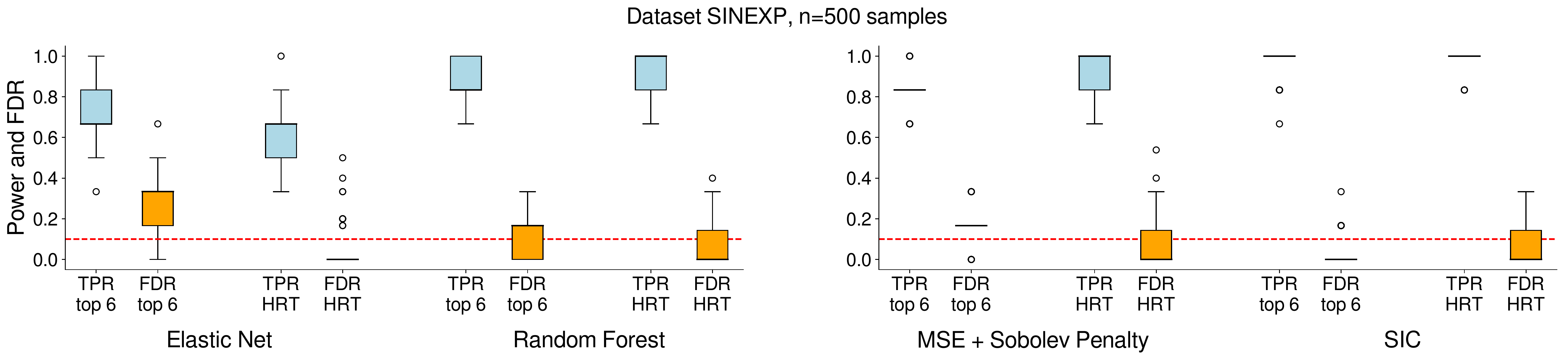}}\\
  \caption{SinExp synthetic dataset.
  TPR and FDR of Elastic Net (EN) and Random Forest (RF) baseline models (left panels) are compared to our methods: a 2-hidden layer neural network with no biases trained to minimize an objective comprising an MSE cost and a Sobolev Penalty term (MSE + Sobolev Penalty), and the same network trained to optimize SIC criterion (right panels), for datasets of 125 samples (top panels) and 500 samples (bottom panels).
  For all models TPR and FDR are computed by selecting the top 6 features in order of feature importance (which for EN is defined as the absolute value of the weight of a feature, for RF is the out-of-bag error associated to it (see \cite{breiman2001random}), and for our method is the value of its $\eta$).
  Selecting the first 6 features is useful to compare models, but assumes \emph{oracle knowledge} of the fact that there are 6 ground truth features.
  We therefore also compute FDR and TPR after selecting features using the HRT method of \cite{HRT} among the top 20 features.
  HRT estimates the importance of a feature quantifying its effect on the distribution of $y$ on a holdout set by replacing its values with samples from a conditional distribution (see Section \ref{sec:HRT}).
  We use HRT to control FDR rate at 10\% (red horizontal dotted line).
  Standard box plots are generated over 100 repetitions of each simulation.}
  \label{fig:sinexp}
\end{figure}


\noindent \textbf{Feature Selection on Drug Response dataset.}
We consider as a real-world application the Cancer Cell Line Encyclopedia (CCLE) dataset \cite{barretina2012cancer},
described in Appendix \ref{subapp:ccle}.
We study the result of using the normalized importance scores $\eta_j$ from SIC for (heuristic) feature selection, against features selected by Elastic Net.
Table \ref{tab:ccle} shows the heldout MSE of a predictor trained on selected features, averaged over 100 runs (each run: new randomized 90\%/10\% data split, NN initialization).
The goal here is to quantify the predictiveness of features selected by SIC on its own, without the full randomized testing machinery.
The SIC critic and regressor NN were respectively the $big\_critic$ and $regressor\_NN$ described with training details in Appendix \ref{app:nndetails}, while the random forest is trained with default hyper parameters from scikit-learn \citep{pedregosa2011scikit}. We can see that, with just $\eta_j$, informative features are selected for the downstream regression task, with performance comparable to those selected by ElasticNet, which was trained explicitly for this task. The features selected with high $\eta_j$ values and their overlap with the features selected by ElasticNet are listed in Appendix \ref{subapp:ccle} Table \ref{tab:ccle_ftrs}.
\label{sec:ccle}
\vskip -0.1in

\begin{table}[ht!]
    \centering
\begin{tabular}{lcc}
\toprule
{} &                 NN &                 RF \\
\midrule
All 7251 features         &  1.160 $\pm$ 3.990 &  0.783 $\pm$ 0.167 \\
Elastic-Net1 \cite{barretina2012cancer}  top-7 &  0.864 $\pm$ 0.432 &  0.931 $\pm$ 0.215 \\
Elastic-Net2 \cite{HRT} top-10   &  \bf{0.663 $\pm$ 0.161} &  0.830 $\pm$ 0.190 \\
SIC top-7              &  0.728 $\pm$ 0.166 &  0.856 $\pm$ 0.189 \\
SIC top-10             &  0.706 $\pm$ 0.158 &  \bf{0.817 $\pm$ 0.173} \\
SIC top-15             &  0.734 $\pm$ 0.168 &  0.859 $\pm$ 0.202 \\
\bottomrule
\end{tabular}
\caption{CCLE results on downstream regression task. Heldout MSE for drug PLX4720 prediction based on selected features. Columns: neural network (NN) and random forest (RF) regressors.}
    \label{tab:ccle}
\vskip -0.2in
\end{table}
\noindent \textbf{HIV-1 Drug Resistance with Knockoffs-SIC.}
The second real-world dataset that we analyze is the HIV-1 Drug Resistance\cite{rhee2006genotypic}, which consists in detecting mutations associated with resistance to a drug type. For our experiments we use all the three classes of drugs: Protease Inhibitors (PIs), Nucleoside Reverse Transcriptase Inhibitors (NRTIs), and Non-nucleoside Reverse Transcriptase Inhibitors (NNRTIs). We use the pre-processing of each dataset (<drug-class, drug-type>) of the knockoff tutorial \cite{hivStanfordTutorial} made available by the authors. Concretely, we construct a dataset $(X,\tilde{X})$ of the concatenation of the real data and Gaussian knockoffs~\citep{Candes2018PanningFG}, and fit $SIC([X,\tilde{X}],Y)$. As explained in Section \ref{sec:HRT}, we use in the knockoff filter the statistics $W_j=\eta_j-\eta_{j+d_x}$, i.e.\ the difference of SIC importance scores between each feature and its corresponding knockoff. For SIC experiments, we use $small\_critic$ architecture (See Appendix \ref{app:nndetails} for training details).
We use Boosted SIC, by varying the batch sizes in $N \in \{10, 30, 50\}$, and computing the geometric mean of $\eta$ produced by those three setups as the feature importance needed for Knockoffs. Results are summarized in Table \ref{tab:SICknockoff}. 
\vskip -0.05in
\begin{table}[ht!]
\centering
\begin{tabular}{l|c|rrr|rrr}
\toprule
Drug Class  & Drug Type & \multicolumn{3}{c}{Knockoff with GLM} & \multicolumn{3}{c}{Boosted SIC Knockoff} \\
\midrule
{} & {} & TD & FD & FDP & TD & FD & FDP \\
\midrule
\multirow{7}{*}{PIs} & APV   &  19 & 3 &  \textbf{0.13} &  17 & 5 & 0.22        \\  
                  & ATV      &    22 & 8 & 0.26      &    19 & 1 & \textbf{0.05}     \\  
                  &  IDV       &    19 & 12 & 0.38      &   15 & 3 & \textbf{0.16}      \\
                  &  LPV       &    16 & 1 & \textbf{0.05}      &    14 & 2 & 0.12     \\
                  &  NFV       &    24 & 7 & 0.22      &    19 & 5 & \textbf{0.21}     \\
                  &  RTV       &    19 & 8 & 0.29      &    12 & 2 & \textbf{0.20}     \\
                  &  SQV       &    17 & 4 & \textbf{0.19}      &    14 & 8 & 0.36     \\
\midrule
\multirow{5}{*}{NRTIs} &  X3TC &  0 & 0 & 0     & 7 & 0 & \textbf{0}        \\  
                  & ABC        & 10 & 1 & 0.09  & 11 & 1 & \textbf{0.08}     \\  
                  &   AZT      & 16 & 4 & \textbf{0.2}   & 12 & 5 & 0.29        \\ 
                  & D4T        & 6 & 1 & 0.14   &  8 & 0 & \textbf{0}        \\ 
                  &  DDI       & 0 & 0 & 0      &    8 & 0 & \textbf{0}     \\ 
\midrule
\multirow{3}{*}{NNRTIs} & DLV  &    10 & 13 & 0.56      &    8 & 10 & \textbf{0.55}     \\  
                  & EFV        &    11 & 11 & 0.5      &    11 & 10 & \textbf{0.47}     \\  
                  &  NVP       &    7 & 10 & \textbf{0.58}      &  7 & 11 & 0.611       \\
                   
\bottomrule
\end{tabular}
\caption{Comparison of applying (knockoff filter + GLM) and (Knockoff filter+Boosted SIC). For each <drug-class, drug-type> we compared the True Discoveries (TD), False Discoveries(FD) and False Discovery Proportion (FDP). Knockoff with Boosted SIC keeps FDP under control without compromising power, and succeeds in making true discoveries that GLM with knockoffs doesn't find.}
\label{tab:SICknockoff}
\vskip -0.2in
\end{table}

\section{Conclusion}
We introduced in this paper the Sobolev Independence Criterion (SIC), a dependency measure that gives rise to feature importance which can be used for feature selection and interpretable decision making.
We laid down the theoretical foundations of SIC and showed how it can be used in conjunction with the Holdout Randomization Test and Knockoffs to control the FDR, enabling reliable discoveries. We demonstrated the merits of SIC for feature selection in extensive synthetic and real-world experiments with controlled FDR.

\bibliographystyle{unsrt}
\bibliography{simplex,refs}

\begin{thebibliography}{10}

\bibitem{Sriperumbudur2009OnIP}
Bharath~K. Sriperumbudur, Kenji Fukumizu, Arthur Gretton, Bernhard Scholkopf,
  and Gert R.~G. Lanckriet.
\newblock On integral probability metrics, $\phi$-divergences and binary
  classification.
\newblock 2009.

\bibitem{HSIC}
A.~Gretton, K.~Fukumizu, CH. Teo, L.~Song, B.~Sch{\"o}lkopf, and AJ. Smola.
\newblock A kernel statistical test of independence.
\newblock In {\em Advances in neural information processing systems 20}, 2008.

\bibitem{MMD}
Arthur Gretton, Karsten~M. Borgwardt, Malte~J. Rasch, Bernhard Sch\"{o}lkopf,
  and Alexander Smola.
\newblock A kernel two-sample test.
\newblock {\em JMLR}, 2012.

\bibitem{KernelMeanEmbedding}
Krikamol Muandet, Kenji Fukumizu, Bharath Sriperumbudur, and Bernhard
  Sch\"{o}lkopf.
\newblock Kernel mean embedding of distributions: A review and beyond.
\newblock {\em Arxiv}, 2017.

\bibitem{SobolevGAN}
Youssef Mroueh, Chun-Liang Li, Tom Sercu, Anant Raj, and Yu~Cheng.
\newblock Sobolev gan.
\newblock {\em ICLR}, 2018.

\bibitem{SobolevDescent}
Youssef Mroueh, Tom Sercu, and Anant Raj.
\newblock Sobolev descent.
\newblock In {\em AISTATS}, 2019.

\bibitem{Arbel:2018}
Michael Arbel, Dougal~J. Sutherland, Mikolaj Binkowski, and Arthur Gretton.
\newblock On gradient regularizers for mmd gans.
\newblock {\em NeurIPS}, 2018.

\bibitem{HRT}
W.~Tansey, V.~Veitch, H.~Zhang, R.~Rabadan, and D.~M. Blei.
\newblock The holdout randomization test: Principled and easy black box feature
  selection.
\newblock {\em arXiv preprint arXiv:1811.00645}, 2018.

\bibitem{Candes2018PanningFG}
Emmanuel Candes, Yingying Fan, Lucas Janson, and Jinchi Lv.
\newblock Panning for gold: model-x knockoffs for high dimensional controlled
  variable selection.
\newblock 2018.

\bibitem{mmdFeatureSelection}
Le~Song, Alex Smola, Arthur Gretton, Justin Bedo, and Karsten Borgwardt.
\newblock Feature selection via dependence maximization.
\newblock {\em J. Mach. Learn. Res.}, 2012.

\bibitem{Feng2017SparseInputNN}
Jean Feng and Noah Simon.
\newblock Sparse-input neural networks for high-dimensional nonparametric
  regression and classification.
\newblock 2017.

\bibitem{penalizedNNDropOne}
Mao Ye and Yan Sun.
\newblock Variable selection via penalized neural network: a drop-out-one loss
  approach.
\newblock In {\em Proceedings of the 35th International Conference on Machine
  Learning}, 2018.

\bibitem{gating}
Yutaro Yamada, Ofir Lindenbaum, Sahand Negahban, and Yuval Kluger.
\newblock Deep supervised feature selection using stochastic gates.
\newblock {\em Arxiv}, 2018.

\bibitem{sparsityK}
Lorenzo Rosasco, Silvia Villa, Sofia Mosci, Matteo Santoro, and Alessandro
  Verri.
\newblock Nonparametric sparsity and regularization.
\newblock {\em J. Mach. Learn. Res.}, 2013.

\bibitem{wahba1975smoothing}
Grace Wahba.
\newblock Smoothing noisy data with spline functions.
\newblock {\em Numerische mathematik}, 24(4), 1975.

\bibitem{drucker1992improving}
Harris Drucker and Yann LeCun.
\newblock Improving generalization performance using double backpropagation.
\newblock {\em IEEE Transactions on Neural Networks}, 1992.

\bibitem{gulrajani2017improved}
Ishaan Gulrajani, Faruk Ahmed, Martin Arjovsky, Vincent Dumoulin, and Aaron
  Courville.
\newblock Improved training of wasserstein gans.
\newblock {\em arXiv:1704.00028}, 2017.

\bibitem{Argyriou:2008}
Andreas Argyriou, Theodoros Evgeniou, and Massimiliano Pontil.
\newblock Convex multi-task feature learning.
\newblock {\em Mach. Learn.}, 2008.

\bibitem{Bach:2011}
Francis Bach, Rodolphe Jenatton, and Julien Mairal.
\newblock {\em Optimization with Sparsity-Inducing Penalties (Foundations and
  Trends(R) in Machine Learning)}.
\newblock Now Publishers Inc., Hanover, MA, USA, 2011.

\bibitem{VINOD1976147}
H.D. Vinod.
\newblock Canonical ridge and econometrics of joint production.
\newblock {\em Journal of Econometrics}, 1976.

\bibitem{NIPS2007_3340}
Kenji Fukumizu, Arthur Gretton, Xiaohai Sun, and Bernhard Sch\"{o}lkopf.
\newblock Kernel measures of conditional dependence.
\newblock In {\em Advances in Neural Information Processing Systems 20}. 2008.

\bibitem{Beck:2003}
Amir Beck and Marc Teboulle.
\newblock Mirror descent and nonlinear projected subgradient methods for convex
  optimization.
\newblock {\em Oper. Res. Lett.}, 2003.

\bibitem{selfexplainable}
David Alvarez~Melis and Tommi Jaakkola.
\newblock Towards robust interpretability with self-explaining neural networks.
\newblock In {\em Advances in Neural Information Processing Systems 31}. 2018.

\bibitem{Benjamini:1995ram}
Y.~Benjamini and Y.~Hochberg.
\newblock {Controlling the false discovery rate: A Practical and powerful
  approach to multiple testing}.
\newblock {\em J. Roy. Statist. Soc.}, 57:289--300, 1995.

\bibitem{barber2015controlling}
Rina~Foygel Barber, Emmanuel~J Cand{\`e}s, et~al.
\newblock Controlling the false discovery rate via knockoffs.
\newblock {\em The Annals of Statistics}, 43(5):2055--2085, 2015.

\bibitem{mine}
Mohamed~Ishmael Belghazi, Aristide Baratin, Sai Rajeswar, Sherjil Ozair, Yoshua
  Bengio, Aaron Courville, and R~Devon Hjelm.
\newblock Mine: Mutual information neural estimation, 2018.

\bibitem{ozair2019wasserstein}
Sherjil Ozair, Corey Lynch, Yoshua Bengio, Aaron van~den Oord, Sergey Levine,
  and Pierre Sermanet.
\newblock Wasserstein dependency measure for representation learning, 2019.

\bibitem{hastie01statisticallearning}
Trevor Hastie, Robert Tibshirani, and Jerome Friedman.
\newblock {\em The Elements of Statistical Learning}.
\newblock Springer New York Inc., 2001.

\bibitem{Breiman:2001}
Leo Breiman.
\newblock Random forests.
\newblock {\em Mach. Learn.}, 2001.

\bibitem{GradInput}
Avanti Shrikumar, Peyton Greenside, and Anshul Kundaje.
\newblock Learning important features through propagating activation
  differences.
\newblock In {\em Proceedings of the 34th International Conference on Machine
  Learning}, 2017.

\bibitem{saliency}
Karen Simonyan, Andrea Vedaldi, and Andrew Zisserman.
\newblock Deep inside convolutional networks: Visualising image classification
  models and saliency maps.
\newblock {\em International Conference on Learning Representations (Workshop
  Track).}, 2014.

\bibitem{RelvPropagation}
Sebastian Bach, Alexander Binder, Gr{\'e}goire Montavon, Frederick Klauschen,
  Klaus-Robert M{\"u}ller, and Wojciech Samek.
\newblock On pixel-wise explanations for non-linear classifier decisions by
  layer-wise relevance propagation.
\newblock {\em PLoS ONE}, 2015.

\bibitem{feng2017sparse}
Jean Feng and Noah Simon.
\newblock Sparse-input neural networks for high-dimensional nonparametric
  regression and classification.
\newblock {\em arXiv preprint arXiv:1711.07592}, 2017.

\bibitem{liang2018bayesian}
Faming Liang, Qizhai Li, and Lei Zhou.
\newblock Bayesian neural networks for selection of drug sensitive genes.
\newblock {\em Journal of the American Statistical Association}, 113(523),
  2018.

\bibitem{breiman2001random}
Leo Breiman.
\newblock Random forests.
\newblock {\em Machine learning}, 45(1):5--32, 2001.

\bibitem{barretina2012cancer}
Jordi Barretina, Giordano Caponigro, Nicolas Stransky, Kavitha Venkatesan,
  Adam~A Margolin, Sungjoon Kim, Christopher~J Wilson, Joseph Leh{\'a}r,
  Gregory~V Kryukov, Dmitriy Sonkin, et~al.
\newblock The cancer cell line encyclopedia enables predictive modelling of
  anticancer drug sensitivity.
\newblock {\em Nature}, 483(7391):603, 2012.

\bibitem{pedregosa2011scikit}
Fabian Pedregosa, Ga{\"e}l Varoquaux, Alexandre Gramfort, Vincent Michel,
  Bertrand Thirion, Olivier Grisel, Mathieu Blondel, Peter Prettenhofer, Ron
  Weiss, Vincent Dubourg, et~al.
\newblock Scikit-learn: Machine learning in python.
\newblock {\em Journal of machine learning research}, 12(Oct):2825--2830, 2011.

\bibitem{rhee2006genotypic}
Soo-Yon Rhee, Jonathan Taylor, Gauhar Wadhera, Asa Ben-Hur, Douglas~L Brutlag,
  and Robert~W Shafer.
\newblock Genotypic predictors of human immunodeficiency virus type 1 drug
  resistance.
\newblock {\em Proceedings of the National Academy of Sciences},
  103(46):17355--17360, 2006.

\bibitem{hivStanfordTutorial}
Matteo Sesia and Evan Patterson.
\newblock R tutorial for knockoffs - 4.
\newblock
  \url{https://web.stanford.edu/group/candes/knockoffs/software/knockoffs/tutorial-4-r.html},
  2017.

\bibitem{Tseng:2001}
P.~Tseng.
\newblock Convergence of a block coordinate descent method for
  nondifferentiable minimization.
\newblock {\em J. Optim. Theory Appl.}, 109, 2001.

\bibitem{jRazaviyaynHL13}
Meisam Razaviyayn, Mingyi Hong, and Zhi-Quan Luo.
\newblock A unified convergence analysis of block successive minimization
  methods for nonsmooth optimization.
\newblock {\em SIAM Journal on Optimization}, 2013.

\bibitem{van2016conditional}
Aaron Van~den Oord, Nal Kalchbrenner, Lasse Espeholt, Oriol Vinyals, Alex
  Graves, et~al.
\newblock Conditional image generation with pixelcnn decoders.
\newblock In {\em Advances in neural information processing systems}, pages
  4790--4798, 2016.

\bibitem{perez2017learning}
Ethan Perez, Harm de~Vries, Florian Strub, Vincent Dumoulin, and Aaron
  Courville.
\newblock Learning visual reasoning without strong priors.
\newblock {\em arXiv preprint arXiv:1707.03017}, 2017.

\bibitem{kingma2014adam}
Diederik Kingma and Jimmy Ba.
\newblock Adam: A method for stochastic optimization.
\newblock In {\em ICLR}, 2015.

\bibitem{paszke2017automatic}
Adam Paszke, Sam Gross, Soumith Chintala, Gregory Chanan, Edward Yang, Zachary
  DeVito, Zeming Lin, Alban Desmaison, Luca Antiga, and Adam Lerer.
\newblock Automatic differentiation in pytorch.
\newblock In {\em NIPS-W}, 2017.

\bibitem{romano2019}
Yaniv Romano, Matteo Sesia, and Emmanuel Cand{\`e}s.
\newblock Deep knockoffs.
\newblock {\em Journal of the American Statistical Association}, pages 1--27,
  2019.

\end{thebibliography}

%
%
%
\newpage 

\appendix

\section{Algorithms for Convex SIC}\label{alg:ConvexSIC}

\noindent \textbf{Algorithms and Empirical Convex SIC from Samples.}
Given samples from the joint and the marginals,  it is easy to see that the empirical loss $\hat{L}_{\varepsilon}$ can be written in the same way with empirical feature mean embeddings $\hat{\mu}(p_{xy})= \frac{1}{N}\sum_{i=1}^N\Phi_{\omega}(x_i,y_i)$ and $\hat{\mu}(p_{x}p_{y})= \frac{1}{N}\sum_{i=1}^N\Phi_{\omega}(x_i,\tilde{y}_i)$, covariances $\hat{C}(p_{x}p_{y})= \frac{1}{N}\sum_{i=1}^N\Phi_{\omega}(x_i,\tilde{y}_i)\otimes \Phi_{\omega}(x_i,\tilde{y}_i)$ and derivatives grammians $\hat{D}_{j}(p_{x}p_{y})=\frac{1}{N}\sum_{i=1}^N\frac{ \partial \Phi_{\omega}(x_i,\tilde{y}_i)}{\partial x_j } \otimes\frac{ \partial \Phi_{\omega}(x,y)}{\partial x_j } $. Given the strict convexity of $\hat{L}_{\varepsilon}$ jointly in $u$ and $\eta$, alternating optimization as given in Algorithm \ref{alg:Altmin1} in Appendix  is known to be convergent to a global optima (Theorem 4.1 in \citep{Tseng:2001}). Similarly Block Coordinate Descent  (BCD) using first order methods as given in Algorithms \ref{alg:sBCD} and  \ref{alg:BCD} (in Appendix):  gradient descent on $u$ and mirror descent on $\eta$ (in order to satisfy the simplex constraint \citep{Beck:2003}) are also known to be globally convergent   (Theo 2 in \citep{jRazaviyaynHL13}.)

\begin{center}
\vskip -0.2in
\begin{minipage}[t]{0.49\textwidth}
\null 
\begin{algorithm}[H]
\caption{Alternating Optimization}
\begin{algorithmic}
 \STATE {\bfseries Inputs:} $\varepsilon$,$\lambda$, $\tau$, $\rho$ , $\Phi_{\omega}$  \\
 \STATE {\bfseries Initialize} $\hat{\eta}_j=\frac{1}{d_x},\forall j$, $\hat{\delta} = \hat{\mu}(p_{xy}) -\hat{ \mu}(p_xp_y)$ \\
 \FOR{ $i=1\dots \text{Maxiter}$}
\STATE $\hat{u} \gets $
\STATE $\left( \lambda \sum_{j=1}^{d_x} \frac{\hat{D}_{j}(p_{x}p_{y}) }{\hat{\eta}_j} + \rho \hat{C}(p_xp_y)+ \tau I_m \right)^{-1}\hat{\delta}$
\STATE $\hat{\eta}_j \gets  \frac{\sqrt{\scalT{\hat{u}}{\hat{D}_j(p_x p_y) \hat{u}}+\varepsilon}}{ \sum_{k=1}^{d_x}\sqrt{\scalT{\hat{u}}{\hat{D}_k(p_{x}p_{y}) \hat{u}}+\varepsilon}}$
 \ENDFOR
 \STATE {\bfseries Output:} $\hat{u},\hat{\eta}$ 
 \end{algorithmic}
 \label{alg:Altmin1}
\end{algorithm}
\end{minipage}%
\hfill
\begin{minipage}[t]{0.49\textwidth}
\null
\begin{algorithm}[H]
\caption{Block Coordinate Descent }
\begin{algorithmic}
 \STATE {\bfseries Inputs:} $\varepsilon$,$\lambda$, $\tau$, $\rho$, $\alpha,\alpha_{\eta}$ (learning rates),$\Phi_{\omega}$   \\
 \STATE {\bfseries Initialize} $\hat{\eta}_j=\frac{1}{d_x},\forall j$ , $\text{Softmax}(z)= e^{z}/ \sum_{j=1}^{d_x} e^{z_j}$\\
 \FOR{ $i=1\dots \text{Maxiter}$}
\STATE \textcolor{blue}{Gradient step $u$}: 
\STATE $\hat{u}\gets \hat{u} -\alpha \frac{\partial \hat{L}_{\varepsilon}(\hat{u},\hat{\eta})}{\partial u} $
\STATE \textcolor{blue}{Mirror Descent $\eta$} :
\STATE  $\text{logit} \gets \log(\hat{\eta})- \alpha_{\eta} \frac{\partial \hat{L}_{\varepsilon}(\hat{u},\hat{\eta})}{\partial \eta}  $
\STATE $\hat{\eta} \gets \text{Softmax}(\text{logit})$ \COMMENT{stable implementation of softmax}
 \ENDFOR
 \STATE {\bfseries Output:} $\hat{u},\hat{\eta}$ 
 \end{algorithmic}
 \label{alg:BCD}
\end{algorithm}
\end{minipage}
\end{center}
\section{Algorithms for Neural SIC, HRT-SIC and Model-X Knockoff SIC}\label{app:SICNeural}
\begin{center}
\vskip -0.2in
\begin{minipage}[t]{0.49\textwidth}
\null
\begin{algorithm}[H]
\caption{\emph{(non convex)} Neural \textsc{SIC}$(X,Y)$  (Stochastic BCD ) }
\begin{algorithmic}
 \STATE {\bfseries Inputs:} $X,Y$ dataset $X \in \mathbb{R}^{N\times d_x}, Y \in \mathbb{R}^{N\times d_y}$, such that
  $(x_i=X_{i,.},y_i=Y_{i,.}) \sim p_{xy} $  \\
 \STATE {\bfseries Hyperparameters:} $\varepsilon$,$\lambda$, $\tau$, $\rho$, $\alpha_\theta, \alpha_\eta$ (learning rates)   \\
 \STATE {\bfseries Initialize} $\eta_j=\frac{1}{d_x},\forall j$ , $\text{Softmax}(z)= e^{z}/ \sum_{j=1}^{d_x} e^{z_j}$\\
 \FOR{ $iter=1\dots \text{Maxiter}$}
 \STATE Fetch a minibatch of size $N$ $(x_i,y_i) \sim p_{xy}$
 \STATE  Fetch a minibatch of size $N$ $(x_i,\tilde{y}_i) \sim p_{x}p_{y}$ \COMMENT{$\tilde{y}_i$ obtained by permuting rows of $Y$}
 \vskip -02.in
 \STATE \textcolor{blue}{ Stochastic Gradient step on $\theta$}: 
\STATE $\theta \gets  \theta -\alpha_\theta \frac{\partial \hat{L}(f_{\theta},\eta)}{\partial \theta} $\COMMENT{We use ADAM}
\STATE \textcolor{blue}{Mirror Descent $\eta$} :
\STATE  $\text{logit} \gets \log(\eta)- \alpha_\eta \frac{\partial \hat{L}(f_{\theta},\eta)}{\partial \eta}  $
\STATE $\eta \gets \text{Softmax}(\text{logit})$ \COMMENT{stable implementation of softmax}
 \ENDFOR
 \STATE {\bfseries Output:} $f_{\theta},\eta$ 
 \end{algorithmic}
 \label{alg:sBCD}
\end{algorithm}
\end{minipage}
\hfill
\begin{minipage}[t]{0.49\textwidth}
\null 
\begin{algorithm}[H]
\caption{ HRT With SIC $(X,Y)$ }
\begin{algorithmic}
\STATE  {\bfseries Inputs:} $ D_{train}=(X_{tr},Y_{tr})$ , a Heldout set $D_{\text{Holdout}}=(X,Y)$, features Cutoff $K$ \\
\STATE {\bfseries SIC:}  $(f_{\theta^*},\eta_*) = \textsc{SIC}( D_{train})$ \COMMENT{Alg. \ref{alg:sBCD}}
\STATE {\bfseries Score of witness on Hold out :} $S^* = \textsc{mean}(f_{\theta^*}(X,Y))$
\STATE {\bfseries Conditional Generators} Pre-trained conditional Generator : $G(x_{-j},j)$ predicts $X_j|X_{-j}$ 
\vskip -0,2 in
 \STATE {\bfseries  Shortlist :} $I = \textsc{IndexTopK}(\eta) $
 \STATE \COMMENT{$p$-values for  $j \in I $; randomizations tests}
\FOR{ $j\in  I$}
\FOR {$r =1\dots R$}
\STATE Construct $\tilde{X}$, $\tilde{X}_{.,k}= X_{.,k}  \forall k \neq j$ and $\tilde{X}_{.,j}= G(X_{-j},j)$ \COMMENT{Simulate  Null Hyp.}
\STATE  $ S_{j,r} = \textsc{mean}(f_{\theta^*}(\tilde{X},Y))$ \COMMENT{Score of witness function on the Null}
\ENDFOR
\STATE $p_{j}= \frac{1}{R+1}\left(1+ \sum_{r=1}^R 1_{S^r_j\geq  S^*} \right)$ 
 \ENDFOR 
 \STATE  discoveries =\textbf{\textsc{BH}}\text{(p,targetFDR)} \COMMENT{Benjamini-Hochberg Procedure}
 \STATE {\bfseries Output:} discoveries
 \end{algorithmic}
 \label{alg:hrt_sic}
\end{algorithm}
\end{minipage}%

\end{center}

\begin{minipage}[t]{0.66\textwidth}
\null 
\begin{algorithm}[H]
\caption{Model-X Knockoffs FDR control with SIC }
\begin{algorithmic}
\STATE  {\bfseries Inputs:} $ D_{train}=(X_{tr},Y_{tr})$ , Model-X knockoff features $\tilde{X}\sim \mathrm{ModelX}(X_{tr})$, target FDR $q$\\
\STATE {\bfseries Train SIC:}  $(f_{\theta^*},\eta) = \textsc{SIC}([X_{tr}, \tilde{X}], Y)$, \COMMENT{Alg. \ref{alg:sBCD}}
where $[X_{tr}, \tilde{X}]$ is the concatenation of $X_{tr}$ and knockoffs $\tilde{X}$
\FOR{ $j=1,\ldots,d_X$}
\STATE {Compute importance score of $j$ feature:} $W_j=\eta_j - \eta_{j+d_x}$, where $\eta_{j+d_X}$ is the $\eta$ of feature knockoff $\tilde{X}_j$
\ENDFOR
\STATE {Compute threshold $\tau>0$ by setting} \\$\tau = \min\left\{t>0:\frac{\#\{j:W_j\leq-t\}}{\#\{j:W_j\geq t\}}\le q\right\}$
 \STATE {\bfseries Output:} discoveries $\left\{j : W_j>\tau  \right\}$
 \end{algorithmic}
 \label{alg:modelx_sic}
\end{algorithm}
\end{minipage}%

\section{Proofs}

\begin{proof}[Proof of Theorem \ref{theo:ConvexityAndSmoothing}]
1) Let $\delta=  \mu(p_{xy})- \mu(p_xp_y)$.

We have $$ L(u,\eta)= -\scalT{u}{\delta} +\frac{1}{2}\scalT{u}{( \rho C(p_xp_y)+ \tau I_m)u} + \frac{\lambda}{2} \sum_{j} \frac{\scalT{u}{D_j(p_{x}p_{y}) u}}{\eta_j} , u \in \mathbb{R}^m \text{ and } \eta \in \Delta^{d_x}$$
where $\Delta^{d_x}$ is the probability simplex.
$L$ is the sum of a linear tem and quadratic terms (convex in $u$) and a  function of the form 
 $$f(u,\eta)=\frac{1}{2} \sum_{j=1}^{d_{x}} \frac{u^{\top}A_ju }{\eta_j}$$
where $A_j$ are PSD matrices, and $\eta$ is in the  probability simplex (convex). Hence  it is enough to show that $f$ is jointly convex.  
 Let $g(w,\eta)=\frac{w^{\top}Aw }{\eta}, \eta >0 $. The Hessian of $g(w,\eta)$, $Hg$ has the following form: 
\[
Hg(w,\eta)=
\left[
\begin{array}{c|c}
\frac{\partial^2 L}{\partial w\otimes \partial w} & \frac{\partial^2 L}{\partial w  \partial \eta}\\ \hline
\frac{\partial^2 L}{\partial \eta \partial w} &  \frac{\partial^2 L}{\partial \eta ^2 } 

\end{array}
\right]=
\left[
\begin{array}{c|c}
\frac{A}{\eta}  & -\frac{Aw}{\eta^2} \\ \hline
-\frac{w^{\top}A}{\eta^2} & \frac{w^{\top}Aw}{\eta^3}
\end{array}
\right]
\]
Let us prove that for all $(w,\eta), \eta_j>, \forall j0$:
 $$(w',\eta')^{\top}Hg(w,\eta)(w',\eta')\geq 0,\forall (w',\eta'), \eta'_j >0 ,\forall j$$
 We have :
 \begin{eqnarray*}
 (w',\eta')^{\top}Hg(w,\eta)(w',\eta')&=&\frac{\scalT{w'}{Aw'}}{\eta}-2 \eta' \frac{\scalT{w'}{Aw}}{\eta^2}+ \eta'^2\frac{w^{\top}Aw}{\eta^3}\\
 &=& \frac{1}{\eta}\left(\scalT{w'}{Aw'}-\frac{2\eta'}{\eta}\scalT{w'}{Aw}+ \frac{\eta'^2}{\eta^2} w^{\top}Aw\right)\\
 &=& \frac{1}{\eta} \nor{A^{\frac{1}{2}} w'- \frac{\eta'}{\eta} A^{\frac{1}{2}}w }^2_{2}\geq 0 \text{ for } \eta>0
 \end{eqnarray*}
 Now back to $f$ it is easy to see that : 
$$ (w',\eta')^{\top}Hf(w,\eta)(w',\eta')=\sum_{j=1}^{d_x} \frac{1}{\eta_j} \nor{A_j^{\frac{1}{2}} w'- \frac{\eta'_j}{\eta_j} A_j^{\frac{1}{2}}w }^2_{2}\geq 0 \text{ for } \eta \in \Delta^{d_x}_j,\eta_j>0.$$
 Hence the loss $L$ is jointly  convex in $(u,\eta)$. Due to discontinuity at $\eta_j=0$ the loss is not continuous .\\
  
\noindent 2)  It is easy to see that the hessian becomes definite:
$$ (w',\eta')^{\top}HL_{\varepsilon}(w,\eta)(w',\eta')=\sum_{j=1}^{d_x} \frac{1}{\eta_j} \left( \nor{A_j^{\frac{1}{2}} w'- \frac{\eta'_j}{\eta_j} A_j^{\frac{1}{2}}w }^2_{2} + \varepsilon (\frac{\eta'_j}{\eta_j})^2 \right)>  0 \text{ for } \eta \in \Delta^{d_x}_j,\eta_j,\eta'_j>0,$$
and $L_{\varepsilon}(u,\eta)$ is jointly strictly convex, $u$ is unconstrained and $\eta$ belongs to a convex set (the probability simplex)   and hence admits a unique minimizer.

\noindent 3) The unique minimizer  satisfies first order optimality conditions for the following Lagragian:
$$ \mathcal{L}(u,\eta, \xi)= L_{\varepsilon}(u,\eta)+\xi(\sum_j \eta_j -1) $$
$$ \frac{\partial \mathcal{L}(u,\eta,\xi)}{\partial u}= -\delta +  \left( \lambda \sum_{j=1}^{d_x} \frac{D_{j}(p_{x}p_{y}) }{\eta_j} + \rho C(p_xp_y)+ \tau I_m \right) u=0  $$
and 
$$\frac{\partial \mathcal{L}(u,\eta,\xi)}{\partial \eta_j}= -\frac{\lambda}{2}\frac{\scalT{u}{D_j(p_xp_y) u} + \varepsilon }{\eta^2_j} +\xi =0 $$
and 
$$\frac{\partial \mathcal{L}(u,\eta,\xi)}{\partial \xi} = \sum_{j}\eta_j-1=0$$
Hence:
$$u^*_{\varepsilon}= \left( \lambda \sum_{j=1}^{d_x} \frac{D_{j}(p_{x}p_{y}) }{\eta^*_j} + \rho C(p_xp_y)+ \tau I_m \right)^{-1}(\mu(p_{xy}) - \mu(p_xp_y) )$$
and :
$$\eta_{j,\varepsilon}^* = \frac{\sqrt{\scalT{u^*_{\varepsilon}}{D_j(p_x p_y) u^*_{\varepsilon}}+\varepsilon}}{ \sum_{k=1}^{d_x}\sqrt{\scalT{u^*_{\varepsilon}}{D_k(p_{x}p_{y}) u^*_{\varepsilon}}+\varepsilon}}.$$

\end{proof}

\begin{proof}[Proof of Theorem \ref{theo:SICoptim}] The proof follows similar proof in Argryou 2008.
$$S_{\varepsilon}(u)= L(u_{\varepsilon}, \eta(u_{\varepsilon}))= -\scalT{u}{\delta} +\frac{1}{2}\scalT{u}{( \rho C(p_xp_y)+ \tau I_m)u} + \frac{\lambda}{2} \left( \sum_{j} \sqrt{\scalT{u}{D_j(p_{x}p_{y}) u}+\varepsilon}\right)^2 $$
Let $\{(u_{\ell_n},\eta_{\ell_n}(u_{\ell_n})), n \in \mathbb{N}\}$ be  a limiting subsequence of minimizers of $L_{\varepsilon_{\ell_n}}(.,.)$ and let $(u^*,\eta^*)$ be its limit as $n\to \infty$. From the definition of $S_{\varepsilon}(u)$, we see that $\min_{u} S_{\varepsilon}(u)$  decreases as $\varepsilon$ decreases to zero, and admits a limit $\bar{S}= \min_{u}S_{0}(u)$. Hence $S_{\varepsilon_{\ell_n}} \to \bar{S}$. Note that $S_{\varepsilon}(u)$ is continuous in both $\varepsilon$ and $u$ and we have finally $S_0(u^*)=\bar{S}$, and $u^*$ is a minimizer of $S_0$.
\end{proof}

\begin{proof}[Proof of Corollary \ref{cor:InterpretableSIC} ]   The optimum $(u^*_{\varepsilon},\eta^*_{\varepsilon})$  satisfies:
$$-\delta +  \left( \lambda \sum_{j=1}^{d_x} \frac{D_{j}(p_{x}p_{y}) }{\eta_j} + \rho C(p_xp_y)+ \tau I_m \right) u^*_{\varepsilon}=0$$
Let $f^*(x)=\scalT{u}{\Phi_{\omega}(x,y)}$ and define $||f^*_{\varepsilon}||_{\mathcal{F}}=\nor{u^*_{\varepsilon}}_2$. It follows that  $\eta^*_j= \frac{\sqrt{\mathbb{E}_{p_{x}p_{y}} \left|\frac{\partial f^*_{\varepsilon}(x,y)}{\partial x_j}\right|^2 +\varepsilon}}{\sum_{k}\sqrt{\mathbb{E}_{p_{x}p_{y}} \left|\frac{\partial f^*_{\varepsilon}(x,y)}{\partial x_k}\right|^2 +\varepsilon}}  $
\begin{eqnarray*}
&\text{Note that we have} &\mathbb{E}_{p_{xy}}f^*_{\varepsilon}(x,y)- \mathbb{E}_{p_xp_y}f^*_{\varepsilon}(x,y)\\
&=&\scalT{\delta}{u^*_{\varepsilon}}\\
&=&\scalT{u^*_{\varepsilon}}{\left( \lambda \sum_{j=1}^{d_x} \frac{D_{j}(p_{x}p_{y}) }{\eta^*_{j,\varepsilon}} + \rho C(p_xp_y)+ \tau I_m \right) u^*_{\varepsilon}}\\
&=&\lambda \left(\sum_{j=1}^{d_x} \sqrt{\mathbb{E}_{p_xp_y} |\frac{\partial f^*_{\varepsilon}(x,y)}{\partial x_j}|^2 +\varepsilon}\right)^2+ \rho \mathbb{E}_{p_xp_y} f^{*,2}_{\varepsilon}(x,y) + \tau ||f^*_{\varepsilon}||^2_{\mathcal{F}} 
\end{eqnarray*}

\begin{eqnarray*}
  SIC_{(L^1)^2,\varepsilon} &=& \mathbb{E}_{p_{xy}}f^*_{\varepsilon}(x,y)- \mathbb{E}_{p_xp_y}f^*_{\varepsilon}(x,y) - \frac{1}{2}( \lambda \left(\sum_{j=1}^{d_x} \sqrt{\mathbb{E}_{p_xp_y} |\frac{\partial f^*_{\varepsilon}(x,y)}{\partial x_j}|^2 +\varepsilon}\right)^2\\
  &+& \rho \mathbb{E}_{p_xp_y} f^{*,2}_{\varepsilon}(x,y) + \tau ||f^*_{\varepsilon}||^2_{\mathcal{F}} )\\
  &=&\frac{\lambda}{2} \left(\sum_{j=1}^{d_x} \sqrt{\mathbb{E}_{p_xp_y} |\frac{\partial f^*_{\varepsilon}(x,y)}{\partial x_j}|^2 +\varepsilon}\right)^2 +\frac{\rho}{2} \mathbb{E}_{p_xp_y} f^{*,2}_{\varepsilon}(x,y) + \frac{\tau}{2} ||f^*_{\varepsilon}||^2_{\mathcal{F}}   \\ 
  &=& \frac{1}{2}\left(\mathbb{E}_{p_{xy}} f^*_{\varepsilon}(x,y) -  \mathbb{E}_{p_{x}p_{y}} f^*_{\varepsilon}(x,y)\right)
 \end{eqnarray*}

We conclude by taking $\varepsilon \to 0$.
\end{proof}



\section{FDR Control with HRT and Conditional Generative Models}
\label{sec:Generators}

The Holdout Randomization Test (HRT) is a principled method to produce valid $p$-values for each feature, that enables the control over the false discovery of a predictive model \citep{HRT}.
The $p$-value associated to each feature $x_j$ essentially quantifies the result of a conditional independence test with the null hypothesis stating that $x_j$ is independent of the output $y$, conditioned on all the remaining features $\mathbf{x}_{-j}=(x_1,\ldots,x_{j-1}, x_{j+1}, \ldots,x_p)$.
This in practice requires the availability of an estimate of the complete conditional of each feature $x_j$, i.e.\ of $P(x_j|\mathbf{x}_{-j})$.
HRT then samples the values of $x_j$ from this conditional distribution to obtain the $p$-value associated to it.
Taking inspiration from neural network models for conditional generation (see e.g.\ \cite{van2016conditional}) we train a neural network to act as a generator of a features $x_j$ given the remaining features $\mathbf{x}_{-j}$ as inputs, as a replacement for the conditional distributions $P(x_j|\mathbf{x}_{-j})$.
In all of our tasks, one three-layer neural network with 200 ReLU units and Conditional Batch Normalization (BCN) \cite{perez2017learning} applied to all hidden layers serves as generator for all features $j=1,\ldots,p$.
A sample from $P(x_j|\mathbf{x}_{-j})$ is generated by giving as input to the network an index $j$ indicating the feature to generate, and a sample $\mathbf{x}_{-j} \sim P(\mathbf{x}_{-j})$, represented as a sample from the full joint distribution $\mathbf{x} \sim P(x_1,\ldots,x_p)$, with feature $j$ being masked out.
In practice, the index $j$ and $\mathbf{x} \sim P(x_1,\ldots,x_p)$ are given as inputs to the generator, and the neural network model does the masking, and sends the index $j$ to the CBN modules which normalize their inputs using $j$-dependent centering and normalization parameters.
The output of the generator is a $n_{bins}$-dimensional softmax over bins tessellating the range of the distribution of $x_j$, such that the bins are uniform quantiles of the inverse CDF of the distribution of $x_j$ estimated over the training set.
In all simulations we used a number of bins $n_{bins}=100$.

Generators are trained randomly sampling an index $j=1,\ldots,p$ for each sample $\mathbf{x}$ in the training set, and minimizing the cross-entropy loss between the output of the generator neural network $Gen(j, \mathbf{x})$ and $x_j$ using mini-batch SGD.
In particular, we used the Adam optimizer \citep{kingma2014adam} with the default pytorch \citep{paszke2017automatic} parameters and learning rate $\lambda=0.003$ which is halved every 20 epochs, and batch size of 128.

\section{Discussion of SIC: Consistency, Computational Complexity and FDR Control}
\paragraph{SIC consistency.} In order to recover the correct conditional independence we elected to use FDR control techniques to perform those dependent hypotheses testing (btw coordinates). By combining SIC with HRT and knockoffs we can guarantee that the correct dependency is recovered while the FDR is under control. For the consistency of SIC in the classical sense, one needs to analyze the solution of SIC, when the critic is not constrained to belonging to an RKHS. This can be done by studying the solution of the equivalent PDE corresponding to this problem (which is challenging, but we think can also be managed through the $\eta$- trick). Then one would proceed by finding 1) conditions under which this solution exists in the RKHS, 2)  generalization bounds from samples to the population solution in the RKHS. We leave this analysis for future work.
\paragraph{Computational Complexity of Neural SIC.} The cost of training SIC with SGD and mirror descent has the same scaling in the size of the problem as training the base regressor neural network via back-propagation. The only additional overhead is the gradient penalty, where the cost is that of double back-propagation. In our experiments, this added computational cost is not an issue when training is performed on GPU.
\paragraph{SIC-HRT versus SIC-Knockoffs.}
For a comparison between HRT and knockoffs, we refer the reader to \citep{HRT}, which shows similar performance for either method in terms of controlling FDR. Each method has its advantages. In HRT most of the computation is in 1) training the generative models, and 2) performing the randomization test, i.e.\ forwarding the data through the critic and computing $p$-values for each coordinate for $R$ runs. On the other hand, if knockoff features can be modelled as a multivariate Gaussian, controlling FDR with knockoffs can be done very cheaply, since it does not require randomization tests. If instead knockoff features have to be generated through nonlinear models, knockoffs can be computationally expensive as well (see for example \citep{romano2019}). 

\section{Experimental details}
\label{app:expdetails}
\subsection{Synthetic Datasets}
\label{subapp:datadetails}

\subsubsection{Complex Multivariate Synthetic Dataset (SinExp)}
The SinExp dataset is generated from a complex multivariate model proposed in \cite{feng2017sparse} Sec 5.3, where 6 features $x_i$ out of 50 generate the output $y$ through a non-linearity involving the product and composition of the $\cos$, $\sin$ and $\exp$ functions, as follows:
\begin{align*}
y & = \sin(x_1 (x_1 + x_2)) \cos(x_3 + x_4 x_5)\sin(e^{x_5} + e^{x_6} - x_2).
\end{align*}
We increase the difficulty even further by introducing a pairwise correlation between all features of 0.5. We perform experiments using datasets of 125 and 500 samples. 
For each sample size, 100 independent datasets are generated.

\subsubsection{Liang Dataset}
\emph{Liang Dataset} is a variant of the synthetic dataset proposed by \cite{liang2018bayesian}. 
The dataset prescribes a regression model with 500-dimensional correlated input features $x$, where the 1-D regression target $y$ depends on the first 40 features only (the last 460 correlated features are ignored). 
In the original dataset proposed by \cite{liang2018bayesian}, $y$ depends on $4$ features only,
this more complex version of the dataset that uses $40$ features was proposed by \cite{HRT}.
The target $y$ is computed as follows:
\begin{equation}
\label{eqn:liang_40}
y = \sum_{j=0}^{9} \left[ w_{4j} x_{4j} + w_{4j+1} x_{4j+1} + \text{tanh}(w_{4j+2}x_{4j+2} + w_{4j+3}x_{4j+3})\right] + \sigma\epsilon \, ,
\end{equation}
with $\sigma = 0.5$ and $\epsilon \sim \mathcal{N}(0,1)$. 
As in \cite{HRT},
the $500$ features are generated to have $0.5$ correlation coefficient with each other,
\begin{equation}
\label{eqn:benchmark_covariates}
x_j = (\rho + z_j) / 2\, , \quad j = 1, \ldots, 500 \, ,
\end{equation}
where $\rho$ and $z_j$ are independently generated from $\mathcal{N}(0,1)$.

Our experimental results are the average over 100 generated datasets, each consisting of 500 train and 500 heldout samples.
\begin{figure}[ht!]
  \centering
  \subfloat{\includegraphics[scale=0.32]{./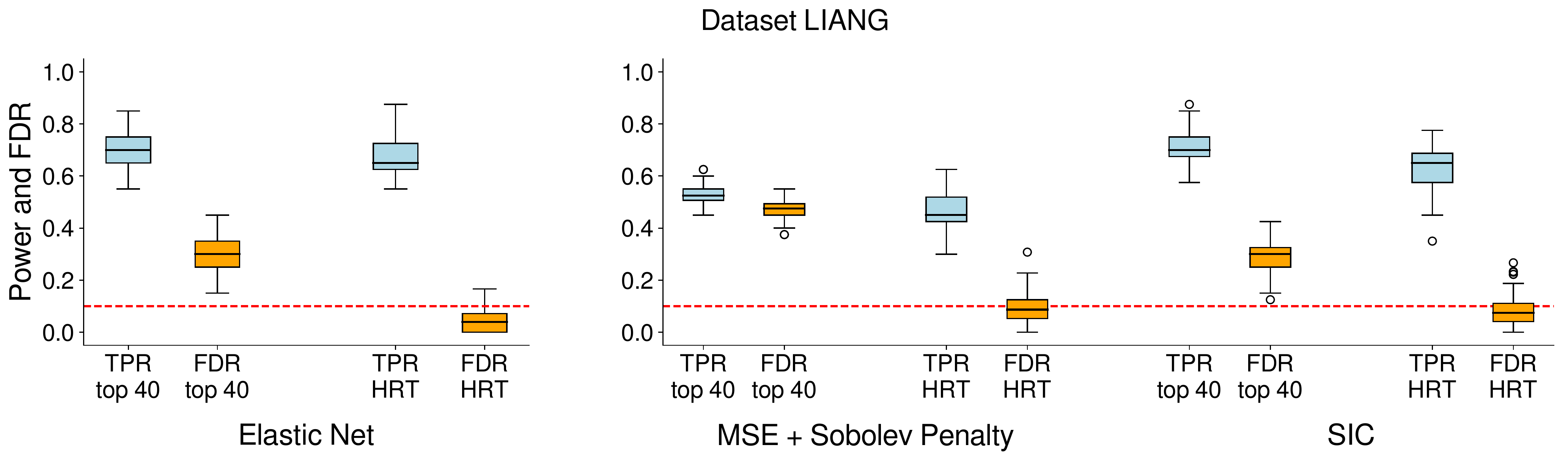}}\\
  \caption{Liang synthetic dataset. TPR and FDR of Elastic Net baseline models (left panels) are compared against our methods, analogously to Fig.\ \ref{fig:sinexp}.
  Differently from Fig.\ \ref{fig:sinexp}, however, TPR and FDR are computed by selecting the top 40 features in order of importance (since this dataset was generated with 40 ground truth features).
  Moreover, HRT is used to select features among the top 100 most important features.}
  \label{fig:liang}
\end{figure}


\subsection{CCLE Dataset}
\label{subapp:ccle}
The Cancer Cell Line Encyclopedia (CCLE) dataset \cite{barretina2012cancer} provides data about of anti-cancer drug response in cancer cell lines.
The dataset contains the phenotypic response measured as the area under the dose-response curve (AUC) for a variety of drugs that were tested against hundreds of cell
lines.
\cite{barretina2012cancer} analyzed each cell to obtain gene mutation and expression features.
The total number of data points (cells) is 479.
We followed the preprocessing steps by \cite{HRT} and 
first screened the genomic features to filter out features with less than 0.1 magnitude Pearson correlation
to the AUC. This resulted in a final set of about 7K features.
The main goal in this task is to discover the genomic features associated with drug response.
Following \cite{HRT},
we perform experiments for the drug PLX4720.
Table \ref{tab:ccle_ftrs} presents the top-10 genomic features selected by SIC according to $\eta_j$ values. In Sec. \ref{sec:exp}, we also present quantitative results that show the effectiveness of these features when used to train regression models. 

\begin{table}[h]
\centering
\begin{tabular}{llr}
\toprule
{} &            Genomic Feature &  $\eta_j$ \\
\midrule
0 &    \verb$BRAF.V600E_MUT$ * &  0.011837 \\
1 &              \verb$ACKR3$  &  0.011712 \\
2 &       \verb$RP11-349I1.2$  &  0.010534 \\
3 &  \verb$BRAF_MUT$ $\dagger$ &  0.010449 \\
4 &           \verb$UBE2V1P5$  &  0.010420 \\
5 &            \verb$EPB41L3$  &  0.010163 \\
6 &  \verb$C11orf85$ $\dagger$ &  0.009622 \\
7 &       \verb$RP11-395F4.1$  &  0.009449 \\
8 &           \verb$SERPINA9$  &  0.009387 \\
9 &          \verb$RN7SKP281$  &  0.009369 \\
\bottomrule
\end{tabular}
\caption{Top-10 Genomic Features selected by SIC according to $\eta_j$ values. These are the most important features for high mutual information with PLX4720 response variable, on the CCLE dataset.
* indicates feature also discovered by Elastic Net and HRT \cite{HRT}. $\dagger$ indicates feature also discovered by Elastic Net in original CCLE paper \cite{barretina2012cancer}. 
}
\label{tab:ccle_ftrs}
\end{table}

\subsection{SIC Neural Network descriptions and training details}
\label{app:nndetails}

The first critic network used in the experiments (with SinExp and HIV-1 datasets) is a standard three-layer ReLU dropout network with no biases, i.e. small\_critic.
When using this network, the inputs $X$ and $Y$ are first concatenated then given as input to the network.
The two first layers have size 100, while the last layer has size 1.
We train the network using Adam optimizer with $\beta_1=0.5$, $\beta_2=0.999$, weight\_decay=1e-4 learning rate $\alpha_\eta$ = 1e-3 and $\alpha_\eta=0.1$, and perform 4000 training iterations/updates, computed with batches of size $100$.
All NNs used in our experiments were implemented using PyTorch \cite{paszke2017automatic}.
\begin{verbatim}
small_critic(
  (branchxy): Sequential(
    (0): Linear(in_features=51, out_features=100, bias=False)
    (1): ReLU()
    (2): Dropout(p=0.3)
    (3): Linear(in_features=100, out_features=100, bias=False)
    (4): ReLU()
    (5): Dropout(p=0.3)
    (6): Linear(in_features=100, out_features=1, bias=False)
  )
)
\end{verbatim}

The critic network used in the experiments with Liang and CCLE datasets contains two different branches that separately process the inputs $X$ (\emph{branchx}) and $Y$ (\emph{branchy}), then the output of these two branches are concatenated and processed by a final branch that contains three-layer LeakyReLU network (\emph{branchxy}). We name this network \emph{big\_critic} (see figure bellow for details about layer sizes). This network is trained with the same Adam settings as above for 4000 updates (Liang) and 8000 updates (CCLE).

\begin{verbatim}
big_critic(
  (branchx): Sequential(
    (0): Linear(in_features=500, out_features=100, bias=True)
    (1): LeakyReLU(negative_slope=0.01)
    (2): Linear(in_features=100, out_features=100, bias=True)
    (3): LeakyReLU(negative_slope=0.01)
  )
  (branchy): Sequential(
    (0): Linear(in_features=1, out_features=100, bias=True)
    (1): LeakyReLU(negative_slope=0.01)
    (2): Linear(in_features=100, out_features=100, bias=True)
    (3): LeakyReLU(negative_slope=0.01)
  )
  (branchxy): Sequential(
    (0): Linear(in_features=200, out_features=100, bias=True)
    (1): LeakyReLU(negative_slope=0.01)
    (2): Linear(in_features=100, out_features=100, bias=True)
    (3): LeakyReLU(negative_slope=0.01)
    (4): Linear(in_features=100, out_features=1, bias=True)
  )
)
\end{verbatim}

The regressor NN used for the downstream regression task in Section \ref{sec:ccle} is a standard three-layer ReLU dropout network.
This regressor NN was trained with the same Adam settings as above for 1000 updates with a batchSize of 16. We did not perform any hyperparameter tuning or model selection on heldout MSE performance.
\begin{verbatim}
regressor_NN(
  (net): Sequential(
    (0): Linear(in_features=7251, out_features=100, bias=True)
    (1): ReLU()
    (2): Dropout(p=0.3)
    (3): Linear(in_features=100, out_features=100, bias=True)
    (4): ReLU()
    (5): Dropout(p=0.3)
    (6): Linear(in_features=100, out_features=1, bias=True)
  )
)
\end{verbatim}


\end{document}